\pgfplotsset{compat=1.18} 
\theoremstyle{plain}
\newtheorem{theorem}{Theorem}
\newtheorem{lemma}{Lemma}
\newtheorem{proposition}{Proposition}
\theoremstyle{definition}
\newtheorem{example}{Example}
\theoremstyle{remark}
\DeclareMathOperator*{\argmin}{arg\,min}
\DeclareMathOperator{\mbE}{\mathbb{E}}
\newcommand{\reals}{\mathbb{R}}
\newcommand{\ccalN}{\mathcal N}
\newcommand{\avt}{\frac{1}{T} \sum_{t=1}^T}
\newcommand{\feasible}{\mathcal C_{CL}(\epsilon)}
\newcommand{\bbI}{\mathbb {I}}
\newcommand{\juanb}[1]{  \ifthenelse{\boolean{showcomments}}
{\todo[inline,color=pink]{Juan B: #1}}{}}
\newcommand{\juanc}[1]{  \ifthenelse{\boolean{showcomments}}
{\todo[inline,color=cyan]{Juan C: #1}}{}}
\newcommand{\miguel}[1]{  \ifthenelse{\boolean{showcomments}}
{\todo[inline,color=yellow]{Miguel: #1}}{}}
\newcommand{\thetacl}{\theta_t^\dag}
\newcommand{\thetac}{\hat\theta_{c}}
\newcommand{\thetastar}{\theta_t^\star}
\newcommand{\thetahat}{\hat\theta_t}
\newcommand{\thetage}{\theta_g^\dag}
\begin{document}

\title{
Cross-Learning from Scarce Data via Multi-Task Constrained Optimization
}
\author{Leopoldo Agorio, Juan Cervi\~no,  Miguel Calvo-Fullana,  Alejandro Ribeiro, and Juan Andr\'es Bazerque.
\thanks{
This work was supported in part by Spain's Agencia Estatal de Investigaci\'on under grant RYC2021-033549-I.
}
\thanks{
L. Agorio and J. A. Bazerque are with the Department of Electrical Engineering, School of Engineering, Universidad de la Rep\'ublica, Montevideo, 11300, Uruguay (e-mail: \mbox{lagorio@fing.edu.uy,jbazerque}@fing.edu.uy) 
}
\thanks{
J. Cervi\~no is with the Massachusetts Institute of Technology, Cambridge, MA 02139, USA (e-mail: \mbox{jcervino}@mit.edu).
}
\thanks{
M. Calvo-Fullana is with the Department of Engineering, Universitat Pompeu Fabra, Barcelona, 08018, Spain (e-mail: miguel.calvo@upf.edu).
}
\thanks{
A. Ribeiro is with the Department of Electrical and Systems Engineering, University of Pennsylvania, Philadelphia, PA 19104, USA (e-mail: \mbox{aribeiro}@seas.upenn.edu).
}
}

\maketitle
\begin{abstract}

A learning task, understood as the problem of fitting a parametric model from supervised data, fundamentally requires the dataset to be large enough to be representative of the underlying distribution of the source. When data is limited, the learned models fail generalize to cases not seen during training. This paper introduces a multi-task \emph{cross-learning} framework to overcome data scarcity by jointly estimating \emph{deterministic} parameters across multiple, related tasks. We formulate this joint estimation as a constrained optimization problem, where the constraints dictate the resulting similarity between the parameters of the different models, allowing the estimated parameters to differ across tasks while still combining information from multiple data sources. This framework enables knowledge transfer from tasks with abundant data to those with scarce data, leading to more accurate and reliable parameter estimates, providing a solution for scenarios where parameter inference from limited data is critical. We provide theoretical guarantees in a controlled framework with Gaussian data, and show the efficiency of our cross-learning method in applications with real data including image classification and propagation of infectious diseases.   
\end{abstract}

\begin{IEEEkeywords}
Supervised learning, multi-task, optimization.
\end{IEEEkeywords}

\IEEEpeerreviewmaketitle

\section{Introduction}
\label{sec:Introduction}

The machine learning problem, in general, involves extracting information from a dataset, which is typically achieved by fitting the parameters of a model \cite{1Hastie2009}, whether it be a neural network or a more specific parametric function that incorporates additional knowledge about the data source. Once fitted, this parametric model can be used for classification, prediction, or estimation, 
 serving various purposes. For example, one might want to classify input signals, as in the case of a robot or autonomous vehicle that captures an image and needs to detect whether there is an obstacle to avoid \cite{2Teichmann2018}. One might also want to predict, for instance, the solar energy generation capacity of a power system, in order to set prices in a day-ahead market \cite{3Lan2019}. In some cases, estimating the parameters themselves constitutes the learning objective. For example, an infection propagation model, such as for influenza or COVID, can be used to predict the number of infected individuals, but the parameters themselves contain information about how the population behaves and which policies are most effective \cite{4Dorr2019}.

Appropriate parameters are typically found by minimizing a loss or fit function, which measures how well the model explains the data for a given parameter set. Beyond this, a fundamental design decision is whether to assume the parameters are deterministic or random, a choice that opens two distinct methodological paths \cite{5Kay1993}. In the Bayesian model, parameters are treated as random variables governed by a prior distribution, which may in turn be modeled by a second layer of hyperparameters. In the contrasting deterministic paradigm, the parameters are assumed to be fixed, unknown quantities that define the data's underlying distribution.
Both the probabilistic and deterministic approaches are valid. The choice between them often depends on one's confidence in a potential prior distribution. For instance, if parameters are known to lie within a certain interval, imposing a uniform prior distribution  over that interval can improve estimation, especially in a low-data regime. However, if this prior information is incorrect, it will introduce bias, degrading the estimation quality.
In this paper, we adopt the deterministic parameter paradigm. We do not assume a prior distribution but instead seek to incorporate additional information to improve performance, particularly when working in a \emph{low-data regime}, as it the norm in diverse fields such as medical data \cite{9Zhou2021}, speech recognition \cite{10Cai2021}, and anomaly detection \cite{11Georgescu2021}.

Our  approach sits at the intersection of multi-task learning \cite{6Caruana1997}, multi-agent systems, and meta-learning \cite{7Finn2017,8Chua2021}. Multi-task learning has the potential of augmenting the data by addressing different tasks together, incorporating data from multiple sources or domains and combining it into  a joint learning problem.  If these sources are related, then they cross-fertilize to improve performance. 
Such data augmentation is particularly useful when data from all sources are scarce, or in cases in which there is a particular incipient source which has not produced enough data yet. This bring us to the concept of meta-learning, which seeks to determine whether models trained on past data are useful for explaining future data—especially when that future data comes from a previously unseen source. Through meta-learning, we can use past experience to learn quickly in a new situation, combining information from similar past cases with the limited data available in the new context. The intuition from multi-task and meta-learning leads us to propose \emph{cross-learning} as a methodology for combining the training from multiple sources.

The central challenge lies in \emph{what} to share between sources and \emph{how} to share it \cite{13Zhang2017,14Vafaeikia2020}. Various approaches have been proposed. In the deep learning context, some studies explore sharing common data representations, which often translates to sharing the early layers of a neural network \cite{15Maurer2016,16Yang2016}. Other works have developed deep architectures with specific blocks shared across tasks \cite{17Misra2016,18Wallingford2022}, often empirically determining which shared blocks yield the most significant impact. In these approaches, part of the architecture is common, while another part remains private to each task. Measuring task-relatedness is also an active research area in the field of multi-task learning. Some work focuses on grouping tasks to learn them jointly within each group \cite{19Standley2020}. Other methods evaluate task differences by measuring discrepancies between their learned models \cite{20Ma2018}, or use a convex surrogate of the learning objective to model task relationships \cite{21Zhang2012}. In contrast to Bayesian formulations that model task-relatedness via shared priors \cite{22Bonilla2007,23Hensman2013}, our approach does not assume any prior structure on the learned functions.

In this paper, we formalize and extend cross-learning for supervised learning, a methodology based on a constrained optimization framework we originally applied in the reinforcement learning paradigm \cite{cervino2019meta,cervino2020multi}. The core idea is to manage the bias-variance trade-off in multi-task settings explicitly. Our preliminary work \cite{cervino2021multi} drew intuition from the two extremes of multi-task learning: (i) training separate, task-specific estimators (using only local data, which can suffer from high variance) and (ii) training a single, consensus model on all data (which can introduce high bias if tasks are dissimilar). Cross-learning finds a balance between these two extremes by allowing task-specific models while imposing constraints that keep them close to each other, thereby controlling the bias-variance trade-off. This work builds on that foundation with several key contributions. First, we provide a theoretical proof that the cross-learning formulation successfully exploits this trade-off, outperforming both the naive separate and consensus approaches. Second, we generalize the method to support arbitrary functional constraints, lifting intuition from \cite{cervino2023multitradeoff}. This is a critical extension, particularly for neural networks, where models with close parameters can still produce highly distant output distributions. By imposing constraints directly on the model outputs rather than the parameter space, we can more effectively regularize the model's behavior. Third, we develop new algorithms to solve these optimization problems in the dual domain. Finally, we validate our theoretical findings and algorithmic performance through experiments on two distinct problems. The first involves fitting an infection model using data from the COVID-19 pandemic, treating data from different countries as separate but related tasks. The second is an image classification task to distinguish objects using distinct representations of the same object. The experimental results confirm that cross-learning consistently outperforms the alternative approaches of training completely separate models or merging all data into a single consolidated model.


\section{Multi-task learning}
\label{sec:MTL}
We consider the problem of jointly optimizing a set of parametric functions, each solving a regression or classification task over a separate dataset.  These datasets are collected from different but \emph{related} sources; therefore, we seek a method to fit the \emph{deterministic} parameters of the associated functions jointly.  Formally, consider a finite set of tasks $t\in[1,\dots,T]$, and let the parametric function $f:\mathcal{X}\times\Theta\to \mathcal{Y}$ be the map between the input space $\mathcal{X}\subset\mathbb{R}^P$, and output space $\mathcal{Y}\subset\mathbb{R}^Q$ parameterized by $\theta\in \mathbb{R}^S$.  Our goal is to find the parameterizations that minimize the expected loss  $\ell:\mathbb{R}^Q\times \mathbb{R}^Q \to \reals^+$ over the probability distribution $p_t(x,y)$,
\begin{align}\label{prob:MTL_statistical}
	\{\thetastar\}_{t=1}^T=\underset{\{\theta_t\}_{t=1}^T}{\arg\min}  \quad   & \frac{1}{T}\sum_{t=1}^T  \mathbb{E}_{p_t(x,y)}[ \ell\left(y,f(x,\theta_t)\right) ]. 
	\tag{$\text{P}_{\text{ML}}$}
\end{align}

 We do not assume, however, to have access to the distributions $p_t(x,y)$. Instead, we align with the multi-task learning literature, assuming that each task $t$ is equipped with a dataset $\mathcal{D}_t$, containing $N_t$ samples  $(x_i,y_i)\in(\mathcal{X},\mathcal{Y}),i=1,\dots,N_t$. The datasets $\mathcal{D}_t$ are assumed to be drawn according to the unknown joint probabilities $p_t(x,y)$. The empirical version of the multi-task learning problem \eqref{prob:MTL_statistical} can thus be written as,
\begin{align}\label{prob:MTL_independent_EMP}
	\{\thetahat\}_{t=1}^T=\underset{\{\theta_t\}_{t=1}^T}{\arg \min}  \quad   &\frac{1}{T}\sum_{t=1}^T \frac{1}{N_t} \sum_{i=1}^{N_t} \ell\left(y_i,f(x_i,\theta_t)\right). 
	\tag{$\text{P}_\text{S}$}
\end{align}
The sole difference between the empirical multi-task learning problem \eqref{prob:MTL_independent_EMP} and its statistical counterpart \eqref{prob:MTL_statistical} is the fact that the expectations over $p_t(x,y)$ have been replaced by an empirical sum over $\mathcal{D}_t$. Under a deterministic model for the parameters $\theta_t$, this empirical problem \eqref{prob:MTL_independent_EMP}  is \emph{separable} across tasks. However, given that the number of samples $N_t$ is finite, solving for the parameters $\hat\theta_t$ separately can be detrimental. If there is correlation across tasks, solving them separately can lead to a loss of information, as data from one task could be beneficial for learning others. Such separate estimation defeats the primary purpose of multi-task learning, which is to learn tasks jointly.
To leverage the joint information from different sources, a simple approach is to merge all datasets and learn a common solution for all tasks, or equivalently, imposing a consensus constraint $\theta_t=\theta$ for all $t$, forcing all tasks to be modeled by the same parameter $\theta_c$, solution to 
\begin{align}\label{prob:MTL_centralized_EMP}
	\thetac= \underset{\theta}{\arg\min}  \quad   &  \frac{1}{T}\sum_{t=1}^T \frac{1}{N_t} \sum_{i=1}^{N_t} \ell\left(y_i,f(x_i,\theta)\right).
	\tag{$\text{P}_\text{C}$}
\end{align}
This \emph{consensus} approach \eqref{prob:MTL_centralized_EMP} does provide a joint solution to our multi-task learning problem, merging data from all sources. However, forcing a strict consensus might be too restrictive in practice, as a solution that is good for all tasks might not be the best for any individual task. Hence, we explore a more balanced approach that trades off between learning individual parameters $\hat{\theta}_t$, as in the \emph{separable} problem \eqref{prob:MTL_independent_EMP}, and a common parameter $\hat{\theta}_c$ as in the  \emph{consensus}  multi-task learning problem \eqref{prob:MTL_centralized_EMP}. 
\subsection{Multi-Task Learning Bias-Variance Trade-Off}

To motivate our proposed approach, let us present a reductive yet informative example to illustrate how bias and variance affect the solutions to \eqref{prob:MTL_independent_EMP} and \eqref{prob:MTL_centralized_EMP}.

\begin{example}\label{ex:bias-variance}
Consider the task of recovering a parameter $\thetastar\in\reals^d$ from  samples corrupted by additive, zero-mean, i.i.d. noise
\begin{align}\label{eq:noisy_data}
    y_{tn}=\thetastar+\eta_{tn},\ n=1,\ldots,N_t
\end{align}
with $\eta_{tn}\sim\ccalN(0,\sigma \bbI)$. To estimate $\thetastar$ from the samples $y_{tn}$, we minimize the mean square error as follows, 
\begin{align}\label{eqn:hatP_indepedent}
   \thetahat &= \argmin_{\theta} \frac{1}{N_t} \sum_{n=1}^{N_t} ||y_{tn}-\theta ||^2 = \frac{1}{N_t}\sum_{n=1}^{N_t} y_{tn}. 
   	\tag{$\bar{\text{P}}_{\text{S}}$}
\end{align}
The estimator $\thetahat$ in \eqref{eqn:hatP_indepedent}, is a particular case of the  separable estimator \eqref{prob:MTL_independent_EMP} with a constant regressor  $f(x,\theta)=\theta$ and square loss $\ell(y,f(x,\theta))=\|y-\theta\|^2$. If the true parameters $\{\thetastar\}$ are close to one another, it might be beneficial to consider a single estimator based on all available samples,
\begin{align}\label{eqn:hatP_centralized}
    \thetac &= \argmin_{\theta} \sum_{t=1}^T \frac{1}{N_t} \sum_{n=1}^N ||y_{tn}-\theta ||^2,
   	\tag{$\bar{\text{P}}_{\text{C}}$}
\end{align}
which is the simplified version of \eqref{prob:MTL_centralized_EMP} for the model in \eqref{eq:noisy_data}. Notice that in this reduced case, the consensus estimator admits a closed-form solution given by the average of all samples, which can be expressed as the average $ \thetac 
= \frac{1}{T}\sum_{t=1}^T \hat \theta_t.$ of the separate estimates $\hat \theta_t$ in \eqref{eqn:hatP_indepedent},
Since the noise is zero-mean, the estimators in \eqref{eqn:hatP_indepedent} are unbiased, and by virtue of the i.i.d. model, their variance reduces with the number of samples according to
\begin{align}\label{eq:var_thetahat}
 \text{var}(\hat \theta_t)=\mathbb{E}\left[\left\|\hat \theta_t-\mathbb{E}[\hat \theta_t]\right\|^2\right]= \frac{d}{N_t}\sigma^2.
 \end{align}
On the other hand, we show in Appendix \ref{app:bias-variance}, that the consensus estimator has variance 
$
\text{var}(\thetac)=\frac{1}{T^2}\sum_{t=1}^{T}\frac{d}{N_t} \sigma^2 $,
which simplifies if all datasets have the same size $N_t=N$,   
\begin{align}
\label{eq:var_thetac_simplified}
\text{var}(\thetac)=\frac{1}{T}\frac{d}{N} \sigma^2.
\end{align}
Intuitively, pooling data from all tasks increases the effective dataset size from $N_t=N$ in \eqref{eq:var_thetahat} to $TN$ in \eqref{eq:var_thetac_simplified}, thereby reducing the uncertainty in the average estimate. However, this reduction in variance comes at the cost of introducing bias:
\begin{align*}
\mathbb{E}\left[\hat \theta_c - \thetastar \right] 
= \frac{1}{T}\sum_{\tau=1}^T E[\hat \theta_\tau] -\thetastar
= \frac{1}{T}\sum_{\tau=1}^T  (\theta_\tau^\star-\thetastar )\neq 0.
\end{align*}
\end{example}

In the light of this basic example, we propose the following \emph{cross-learning} estimator that balances bias and variance by lying in between the fully separable solution \eqref{prob:MTL_independent_EMP} and the consensus counterpart \eqref{prob:MTL_centralized_EMP}. This approach allows the estimated parameters to differ across tasks while still combining information from multiple data sources,
\begin{align}
	\{\thetacl\},\thetage=\underset{\{\theta_t\},\theta_g}{\arg\min}   &  \frac{1}{T}\sum_{t=1}^T\frac{1}{N_t}\sum_{i=1}^{N_t} \ell \left(y_i,f(x_i,\theta_t)\right) \label{prob:CL} \tag{$\text{P}_{\text{CL}}$}
	\\
	\text{subject to:}
	\quad   &  \| \theta_t-\theta_g\| \leq \epsilon, \quad t=1,\ldots,T.\nonumber
\end{align}

The \emph{cross-learning} problem \eqref{prob:CL} introduces a task-specific parameter $\thetacl$ for each task and a global parameter $\thetage$ shared among all tasks. This formulation connects the separable multi-task learning problem \eqref{prob:MTL_independent_EMP} and the consensus approach \eqref{prob:MTL_centralized_EMP} by imposing a constraint on the closeness of the task-specific parameters to the global parameter. This constraint relaxes the strict equality of the consensus problem, allowing the solutions for different tasks to vary.

 The centrality parameter $\epsilon$ controls the degree of similarity between the task-specific solutions. A larger $\epsilon$ allows solutions to be more task-specific, while a smaller $\epsilon$ encourages them to be closer to the global solution. Note that, by enforcing $\epsilon=0$, all solutions are forced to be equal and \eqref{prob:CL} becomes equivalent to enforce consensus in \eqref{prob:MTL_centralized_EMP}. Conversely for a sufficiently large $\epsilon$, the constraint becomes inactive, and \eqref{prob:CL} is equivalent to the separable problem \eqref{prob:MTL_independent_EMP}.
 
The advantages of solving problem \eqref{prob:CL} are twofold. First, unlike the separable problem it combines information across tasks through the constraint, thereby exploiting data from related tasks. Second, unlike the consensus approach, it allows for task-specific solutions, leading to better performance on individual tasks. As we will show, cross-learning controls the bias-variance trade-off by enforcing the task-specific solutions to be close. We will formally prove that for the simplified model in Example \ref{ex:bias-variance}, there exists a value of $\epsilon$ for which our cross-learning estimator achieves a guaranteed improvement in mean squared error compared to both the separable and consensus estimators. And  we will  demonstrate that these improvements generalize to more complex scenarios in experiments with real data.

\section{Performance Analysis}
\label{sec:perf_analysis}

Throughout this section, we will consider the simplified model given by \eqref{eq:noisy_data}, with the associated regressor $f(x,\theta)=\theta$ and the quadratic loss $\ell(y,f(x,\theta))=\|y-\theta\|^2.$ Under these assumptions, $\thetahat=(1/N_t)\sum_{n=1}^{N_t} y_{nt}$ is a sufficient statistic and the cross-learning estimator in \eqref{prob:CL} simplifies to
\begin{align}\label{eqn_perf_ana:cross_learning}
   \{\thetacl \},\thetage&= \argmin_{\theta_t,\theta_g}\sum_{t=1}^{T} ||\thetahat-\theta_t ||^2    	\tag{$\bar{\text{P}}_{\text{CL}}$}\\
   &\text{subject to } || \theta_t-\theta_g||\leq \epsilon, \quad t=1,\ldots,T. \nonumber
\end{align}
 We will compare this estimator \eqref{eqn_perf_ana:cross_learning} to its fully separable \eqref{eqn:hatP_indepedent} and strict consensus \eqref{eqn:hatP_centralized} alternatives using the mean squared error as a performance metric, for which we define
\begin{align}
\mathcal E_S&=  \frac{1}{T}\sum_{t=1}^T \left\|\thetahat-\thetastar\right\|^2 , \label{eqn_mseI_def}\\
\mathcal E_C&= \frac{1}{T}\sum_{t=1}^T \left\|\thetac-\thetastar\right\|^2 , \label{eqn_mseC_def}\\
\mathcal E_{CL}(\epsilon) &=  \frac{1}{T}\sum_{t=1}^T \left\|\thetacl(\epsilon)-\thetastar\right\|^2 ,\label{eqn_mseCL_def}
\end{align}
for the errors resulting from the separable  \eqref{eqn:hatP_indepedent}, consensus \eqref{eqn:hatP_centralized}, and cross-learning \eqref{eqn_perf_ana:cross_learning} estimators, respectively.
The error is measured with respect to the true data-generating parameters $\thetastar$. In what follows, we show that by appropriately tuning the centrality parameter $\epsilon$, the error of our cross-learning estimator \eqref{eqn_perf_ana:cross_learning} can always be made smaller than the error produced by both the separate  \eqref{eqn:hatP_indepedent} and consensus estimators \eqref{eqn:hatP_centralized}. 

To begin with, we compare the cross-learning estimator to the consensus one. We note that $\mathcal E_{CL}(\epsilon)=\mathcal E_{C}$ when $\epsilon=0$ since in this case the constraint in \eqref{eqn_perf_ana:cross_learning} imposes consensus. If we define the difference in error as $\Delta \mathcal E(\epsilon):=\mathcal E_{CL}(\epsilon)-\mathcal E_C$, which is zero at $\epsilon=0$, it is sufficient to show that the derivative of $\mathbb{E}\left[\Delta\mathcal E(\epsilon)\right]$ is strictly negative at $\epsilon=0$. This would guarantee the existence of an $\epsilon >0$ such that $\mathbb{E}\left[\Delta\mathcal E(\epsilon)\right]<0$, meaning that the mean squared error of the cross-learning estimator is strictly lower than the consensus alternative, i.e. $\mathbb{E}\left[\mathcal E_{CL}(\epsilon)\right]<\mathbb{E}\left[\mathcal E_{C}\right]$. The following proposition proves that the derivative of $\mathbb{E}\left[\Delta\mathcal E(\epsilon)\right]$ is indeed strictly negative.
\begin{proposition}\label{prop:centralized_vs_cl}
The cross-learning estimator \eqref{eqn_perf_ana:cross_learning}, for an $\epsilon$ approaching zero, achieves a strictly lower mean squared error than the consensus estimator \eqref{eqn:hatP_centralized}. That is
\begin{align}\label{prop_cl_better_than_c}
\lim_{\epsilon \to 0^+}&\frac{1}{\epsilon}\left(\mathbb{E}\left[\mathcal E_{CL}(\epsilon)-\mathcal E_{C}\right]\right)<0.  
\end{align}
\end{proposition}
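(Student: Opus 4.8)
The plan is to obtain the right derivative $\lim_{\epsilon\to0^+}\tfrac1\epsilon\,\mathbb{E}[\mathcal E_{CL}(\epsilon)-\mathcal E_C]$ by first linearizing the cross-learning estimator in $\epsilon$ around $\epsilon=0$ for a fixed data realization, and then averaging over the Gaussian noise. Fix the data. For a fixed $\theta_g$, each inner minimization over $\theta_t$ in \eqref{eqn_perf_ana:cross_learning} is the Euclidean projection of $\thetahat$ onto the ball of radius $\epsilon$ centered at $\theta_g$; since a.s. $\thetahat\neq\thetac$ for every $t$, for all sufficiently small $\epsilon$ the optimal $\theta_g$ lies near $\thetac$, all $T$ constraints are tight, and the problem collapses to $\min_{\theta_g}\sum_{t}(\|\thetahat-\theta_g\|-\epsilon)^2$. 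This reduced objective is convex, coercive, and strongly convex near $\thetac$ (its Hessian there is close to $2T\bbI$), so its minimizer $\thetage(\epsilon)$ is unique, stays in a shrinking neighborhood of $\thetac$, and --- by the implicit function theorem applied to the stationarity condition $T(\thetac-\theta_g)=\epsilon\sum_t(\thetahat-\theta_g)/\|\thetahat-\theta_g\|$ --- is differentiable at $0^+$ with $\thetage(0)=\thetac$ and $\tfrac{d}{d\epsilon}\thetage(\epsilon)\big|_{\epsilon=0}=-\bar v$, where $\hat v_t:=(\thetahat-\thetac)/\|\thetahat-\thetac\|$ and $\bar v:=\tfrac1T\sum_s\hat v_s$. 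Since on the active set $\thetacl(\epsilon)=\thetage(\epsilon)+\epsilon(\thetahat-\thetage(\epsilon))/\|\thetahat-\thetage(\epsilon)\|$, this yields $\thetacl(\epsilon)=\thetac+\epsilon(\hat v_t-\bar v)+o(\epsilon)$.

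Substituting this expansion into \eqref{eqn_mseCL_def} and subtracting \eqref{eqn_mseC_def} gives
\begin{align*}
\mathcal E_{CL}(\epsilon)-\mathcal E_C=\frac{2\epsilon}{T}\sum_{t=1}^{T}(\thetac-\thetastar)^\top(\hat v_t-\bar v)+o(\epsilon),
\end{align*}
where the remainder is $O(\epsilon^2+\epsilon\max_t\|\thetac-\thetastar\|)$ with an integrable envelope, so dominated convergence lets me pass the limit through the expectation:
\begin{align*}
\lim_{\epsilon\to0^+}\frac1\epsilon\,\mathbb{E}[\mathcal E_{CL}(\epsilon)-\mathcal E_C]=\frac2T\,\mathbb{E}\Big[\sum_{t=1}^{T}(\thetac-\thetastar)^\top(\hat v_t-\bar v)\Big].
\end{align*}
Because $\sum_t(\hat v_t-\bar v)=0$, I may subtract the average $\thetac-\bar\theta^\star$ (with $\bar\theta^\star:=\tfrac1T\sum_s\theta_s^\star$) from each $\thetac-\thetastar$ without changing the sum; that deviation equals $\bar\theta^\star-\thetastar=:-\delta_t$, and since $\sum_t\delta_t=0$ the right-hand side reduces to $-\tfrac2T\sum_t\delta_t^\top\mathbb{E}[\hat v_t]$. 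Writing $\thetahat=\thetastar+\bar\eta_t$ and $\thetac=\bar\theta^\star+\bar\eta$ with $\bar\eta_t$ centered Gaussian, we have $\thetahat-\thetac=\delta_t+\xi_t$ with $\xi_t:=\bar\eta_t-\bar\eta$ centered, symmetric, and (for $T\ge2$) admitting a density, so $\hat v_t=(\delta_t+\xi_t)/\|\delta_t+\xi_t\|$.

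It therefore suffices to establish the elementary sign fact that for any fixed $\delta$ and any symmetric random vector $\xi$ with a density, $\delta^\top\mathbb{E}\big[(\delta+\xi)/\|\delta+\xi\|\big]\ge0$, strictly when $\delta\neq0$. Using the symmetry $\xi\leftrightarrow-\xi$, this quantity equals $\tfrac12\,\mathbb{E}\big[(c+d)/p+(c-d)/q\big]$ with $c=\|\delta\|^2$, $d=\delta^\top\xi$, $p=\|\delta+\xi\|$, $q=\|\delta-\xi\|$; the identity $p^2-q^2=4d$ turns the bracket into $\big(c(p+q)^2-4d^2\big)/\big((p+q)pq\big)$, which is nonnegative because $c(p+q)^2\ge c\,(2\|\xi\|)^2\ge4(\delta^\top\xi)^2=4d^2$ by the triangle inequality $\|\delta+\xi\|+\|\delta-\xi\|\ge2\|\xi\|$ together with Cauchy--Schwarz, with at least one of these inequalities strict on an event of positive probability once $\delta\neq0$. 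Applying this with $\delta=\delta_t$ makes every summand $\delta_t^\top\mathbb{E}[\hat v_t]$ nonnegative and strictly positive for each task whose $\theta_t^\star$ differs from $\bar\theta^\star$; provided the tasks are not all identical (otherwise $\mathcal E_C$ is already optimal and \eqref{prop_cl_better_than_c} holds with equality at first order), at least one such task exists, so the sum is strictly positive and the limit \eqref{prop_cl_better_than_c} is strictly negative. I expect the main obstacle to be the first step --- rigorously establishing uniqueness, concentration near $\thetac$, full activity of the constraints, and differentiability of the minimizer $\thetage(\epsilon)$ as $\epsilon\downarrow0$ --- since it rests on the convexity/coercivity and implicit-function arguments above rather than a one-line computation; the symmetrization lemma is the only other place where a genuine idea beyond bookkeeping is required.
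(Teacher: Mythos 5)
Your proposal is correct and lands on exactly the same first-order quantity as the paper, namely $\lim_{\epsilon\to0^+}\tfrac1\epsilon\mathbb{E}[\mathcal E_{CL}(\epsilon)-\mathcal E_C]=-\tfrac2T\sum_t(\thetastar-\theta_c^\star)^\top\mathbb{E}[\hat u_t]$ with $\hat u_t=(\thetahat-\thetac)/\|\thetahat-\thetac\|$, but it gets there by a genuinely different route on both halves of the argument. For the expansion, the paper works through KKT structure (Lemma \ref{lemma:thetaG}: the centroid is a convex combination of the $\thetahat$; Lemma \ref{lemma:thetacthetag}: $\thetage-\thetac=O(\epsilon)$ with an explicit first-order term; Lemma \ref{lemma:gradientMSE}: the error expansion), whereas you reduce, under almost-sure activity of all constraints, to the unconstrained problem $\min_{\theta_g}\sum_t(\|\thetahat-\theta_g\|-\epsilon)^2$ and differentiate its stationarity condition via the implicit function theorem, obtaining $\thetage(\epsilon)=\thetac-\epsilon\bar v+o(\epsilon)$ and $\thetacl(\epsilon)=\thetac+\epsilon(\hat v_t-\bar v)+o(\epsilon)$; the two expansions agree. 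For the sign, the paper passes to polar coordinates and uses the axial symmetry of the isotropic Gaussian to show $\mathbb{E}[\hat u_t]=a_t(\thetastar-\theta_c^\star)$ with $a_t>0$, while your symmetrization identity with $p=\|\delta+\xi\|$, $q=\|\delta-\xi\|$ and $p^2-q^2=4\delta^\top\xi$ proves $\delta^\top\mathbb{E}[(\delta+\xi)/\|\delta+\xi\|]>0$ for any centrally symmetric noise with a density; this is more elementary (no Jacobian computation), strictly more general than the Gaussian case, and delivers the needed inner-product positivity directly without establishing colinearity, at the cost of not producing the structural facts (projection interpretation, convex-combination centroid) that the paper reuses in Propositions \ref{prop:independent_vs_cl_estimator} and \ref{prop:independent_vs_cl_estimator_strict}. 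Two honest caveats you flag are worth keeping: the strict inequality requires that not all $\theta_t^\star$ coincide (this is implicit in the paper, whose own bound degenerates to zero in that case), and the interchange of limit and expectation plus the activity/IFT regularity are sketched rather than fully detailed --- but the paper's own proof passes from the pathwise expansion of Lemma \ref{lemma:gradientMSE} to the expected limit with no more justification, so your treatment is on par rather than deficient.
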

\begin{proof}
Leveraging Lemma \ref{lemma:gradientMSE} in Appendix \ref{app:proofs} we have 
\begin{align}
    \lim_{\epsilon \to 0^+}&\frac{1}{\epsilon}\left(\mathbb{E}\left[\mathcal E_{CL}(\epsilon)-\mathcal E_{C}\right]\right)=-\frac{2}{T}\sum_{t=1}^T (\thetastar-\theta_c^\star)^\top \mathbb{E}[ \hat u_t] \label{eq:fromlemma4}
\end{align}
where we defined  $\hat u_t=(\thetahat-\thetac)/\|\thetahat-\thetac\|$ to simplify notation. 

We will prove that 
    $\mathbb{E}[\hat u_t]=a_t (\thetastar-\theta^\star_c)$
with $a_t>0$, strictly. After doing so,  we will be able to substitute it in \eqref{eq:fromlemma4} and obtain the desired sign for 
\begin{align}
\lim_{\epsilon \to 0^+}&\frac{1}{\epsilon}\left(\mathbb{E}\left[\mathcal E_{CL}(\epsilon)-\mathcal E_{C}\right]\right)
    =    -2\sum_t a_t \|\thetastar-\theta^\star_c\|^2<0
\end{align}

In order to prove $\mathbb{E}[\hat u_t]=a_t (\thetastar-\theta^\star_c)$, let us analyze the distribution of the difference $\thetahat-\thetac$. With $\thetahat=\thetastar+n_t$ and $\thetac=\theta_c^\star+(1/T)\sum_{t=1}^T n_t$ we can write 
\begin{align}
    \thetahat-\thetac&=\thetastar-\theta^\star_c + \left(1-\frac{1}{T}\right)n_t -\frac{1}{T}\sum_{k\neq t} n_k
\end{align}
If $n_t$ are independent zero-mean Gaussian noise vectors with covariance $\sigma^2 I$ for all $t=1,\ldots,T$ we conclude that 
$   \thetahat-\thetac=\thetastar-\theta^\star_c + v_t$,\
with zero-mean Gaussian noise $v_t\sim \mathcal N(0, \sigma_v^2 \bbI)$ of variance  $\sigma_v^2:=(T-1)\sigma^2/T$.
Under this noise distribution, we can proceed to compute 
\begin{align}\label{eqn:expected_value_inner_prod}
    \mbE[\hat u_t]&=\frac{1}{(2\pi\sigma_v^2)^{n/2}}\int_{\reals^n} \hat u_t e^{-||v_t||^2/(2\sigma_v^2)}dv_t.
\end{align}
where $\hat u_t$  is a function of the integration variable  $v_t$ given by  $\hat u_t=(\thetastar-\theta^\star_c+v_t)/ \|\thetastar-\theta^\star_c+v_t\|$.
\begin{figure}[t]
    \centering
	\includegraphics[scale=1]{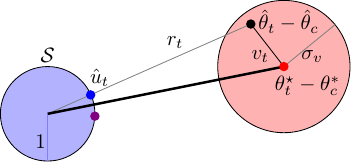}    
    \caption{Intuitive interpretation of proof and the symmetry of $\hat u_t=(\thetahat-\thetac)/\|\thetahat-\thetac\|$ on step \eqref{eq:ptofs} of the proof of Proposition \ref{prop:centralized_vs_cl}. The vector $\hat\theta_t-\hat\theta_c$ admits two representations, one as $\theta_t^\star-\theta_c^\star+v_t$ and a second one as $r_t u_t$, with $r_t=  \|\hat\theta_t-\hat\theta_c\|$, which leads to the change of variables $v_t=r_t u_t-\theta_t^\star-\theta_c^\star$. Furthermore, vectors $\hat u_t$ come in pairs. That is, for each blue point representing the vector $\hat u_t$, there is a purple mirror image across  $\theta_t^\star-\theta_c^\star$  such that  its inner product with $\theta_t^\star-\theta_c^\star$ has same magnitude but opposite sign. }
    \label{fig:proof_symmetry} 
\end{figure}
To evaluate this integral, we introduce the  polar change of variables  illustrated in Fig. \ref{fig:proof_symmetry}. Specifically,  we express the vector $\hat\theta_t-\hat\theta_c=\thetastar-\theta^\star_c+v_t=r_t  u_t$  in polar form with the variable  $r_t\in\reals, \ r_t\geq 0$ representing its norm, and $ u_t$ representing the phasor in the unit sphere $\mathcal S$. Notice that by definition,  $\hat u_t$ is unit norm and points in the direction of $\hat\theta_t-\hat\theta_c$. Thus, we can identify $u_t=\hat u_t$ and express the change of variables as 
   $ v_t= r_t  \hat u_t -(\thetastar-\theta^\star_c)$ with $ 
    r_t \in(0,\infty),$   and $\hat u_t \in \mathcal S:= \{u\in \mathbb R^n: ||u||=1\}$, and rewrite 
\begin{align}
\mbE[\hat u_t]&=\hspace{-3pt}\int_{\mathcal S}\int_{r=0}^\infty\frac{ur^{n-1}}{(2\pi\sigma_v^2)^{n/2}} e^{-\frac{\| r u-(\thetastar-\theta^\star_c)\|^2}{2\sigma_v^2}}drdu=\hspace{-3pt}\int_{\mathcal S}\hspace{-5pt}up(u) du \label{eq:polars}
\end{align}
where we incorporated the Jacobian  $|J|=  r_t^{n-1}$, and defined 
 $p(u)$ as
\begin{align}
    p(u)&=\int_{r=0}^\infty \frac{r^{n-1} \sigma_v}{(2\pi\sigma_v^2)^{n/2}} e^{-\frac{\| r u-(\thetastar-\theta^\star_c)\|^2}{2\sigma_v^2}} dr\\
    &=\int_{r=0}^\infty \frac{ r^{n-1}e^{-q(r)}}{(2\pi\sigma_v^2)^{n/2}}  e^{\frac{r}{\sigma_v}(\thetastar-\theta^\star_c)^\top u} dr\label{eq:ptofs} 
\end{align}
and substituted $q(r):=(r^2 \sigma_v^2  + \|\thetastar-\theta^\star_c\|^2)/(2\sigma_v^2)$. We finish the proof with a symmetry argument. Notice that $p(u)$ in \eqref{eq:ptofs} depends on $u$ through the inner product between $(\thetastar-\theta^\star_c)$ and $u$. Therefore two vectors $u$  with opposite  angles to $(\thetastar-\theta^\star_c)$ (as those depicted in blue and purple in Fig. \ref{fig:proof_symmetry}) receive opposite weights $p(u)$ in the integral \eqref{eq:polars}.  That means that aggregate  contribution of these two mirrored  points to  \eqref{eq:polars} is a vector in the same  direction as $\thetastar-\theta_c^\star$.  Since the sphere $\mathcal S$ has axial symmetry around $\theta_t^\star-\theta_c^\star$, all $u\in \mathcal S$ has a mirrored image around $\theta_t^\star-\theta_c^\star$. Thus, the integral $\int_{\mathcal S}up(u) du$ in  \eqref{eq:polars} is a vector pointing in the direction of  $\theta_t^\star-\theta_c^\star$. It follows that $\mathbb{E}[\hat u_t]$ must be colinear with  $(\thetastar-\theta^\star_c)$, i.e., $\mathbb{E}[\hat u_t] =a_t (\thetastar-\theta^\star_c)$ with $a_t \in \mathbb{R}$. Furthermore, according to \eqref{eq:ptofs} vectors $u$ with acute angles to $(\thetastar-\theta^\star_c)$ have  heavier weights in the integral,  hence $a_t>0$, strictly, as we wanted to prove.
\end{proof}

Proposition \ref{prop:centralized_vs_cl} is relevant to our analysis in the sense that  it guarantees the existence of a value $\epsilon>0$ for which the cross-learning estimator yields a lower mean-squared error than the consensus one. Indeed, with   consensus and cross-learning estimators being equivalent for $\epsilon=0$,  \eqref{prop_cl_better_than_c} implies that by increasing $\epsilon$ slightly, we obtain better performance.

Now that we have established that the cross-learning estimator can outperform  the consensus one, it remains to show that it also improves with respect to  the separable estimator. For this purpose, we establish two bounds for $\mathcal E_{CL}(\epsilon)$ in terms of $\mathcal E_{S}$.  The first one, in Proposition \ref{prop:independent_vs_cl_estimator}, states that if we choose $\epsilon$  large enough, then $\mathcal E_{CL}(\epsilon)\leq \mathcal E_{S}$. 

\begin{proposition}\label{prop:independent_vs_cl_estimator}
Let $\{y_{tn}\}_{n=1}^{N_t}$ be the dataset associated with task $t=1,\ldots, T$, and $ \thetahat=\frac{1}{N_t} \sum_{i=1}^{N_t} y_{tn}$ its  sample average. If $\epsilon$ is chosen large enough to satisfy  
\begin{align}\label{eq:max_max_dif}
\epsilon\geq \max_{t}\max_\tau \|\thetastar-\hat\theta_\tau\|,
\end{align}
then the deterministic error of the cross-learning estimator is smaller than the error of the separable estimator, i.e., \begin{align} \mathcal E_{CL}(\epsilon)-\mathcal E_{S}\leq 0.\label{eq:better_independent}
\end{align}
\end{proposition}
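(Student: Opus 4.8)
The plan is to show that condition \eqref{eq:max_max_dif} makes the centrality constraint in \eqref{eqn_perf_ana:cross_learning} inactive at the optimum, so that the cross-learning estimator coincides, for every realization of the noise, with the separable estimator; inequality \eqref{eq:better_independent} then holds trivially, in fact with equality. The starting observation is that the objective $\sum_{t=1}^T\|\thetahat-\theta_t\|^2$ of \eqref{eqn_perf_ana:cross_learning} does not involve $\theta_g$ and, over all choices of $\{\theta_t\}_{t=1}^T$, is minimized by taking $\theta_t=\thetahat$ for every $t$, attaining the value zero. Hence it suffices to exhibit a single value of $\theta_g$ for which the point $(\thetahat,\ldots,\thetahat,\theta_g)$ is feasible, i.e. satisfies $\|\thetahat-\theta_g\|\le\epsilon$ for all $t$ simultaneously.

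The key step is to take $\theta_g$ equal to one of the true parameters, say $\theta_g=\theta_{t_0}^\star$ for an arbitrary fixed index $t_0$. Then for every task $t$ we have $\|\thetahat-\theta_{t_0}^\star\|\le \max_{t'}\max_{\tau}\|\theta_{t'}^\star-\hat\theta_\tau\|\le\epsilon$, where the last inequality is exactly \eqref{eq:max_max_dif} (the term $\|\theta_{t_0}^\star-\thetahat\|$ is one of those entering the double maximum). Thus $(\thetahat,\ldots,\thetahat,\theta_{t_0}^\star)$ is feasible and attains the unconstrained minimum zero of the objective, so it is globally optimal for \eqref{eqn_perf_ana:cross_learning}. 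Since at any optimizer the nonnegative terms $\|\thetahat-\thetacl\|^2$ sum to zero, each of them vanishes, forcing $\thetacl=\thetahat$ for all $t$ (the global parameter $\thetage$ need not be unique, but the task-specific parameters are). Substituting into the error definitions \eqref{eqn_mseI_def} and \eqref{eqn_mseCL_def} yields $\mathcal E_{CL}(\epsilon)=\frac1T\sum_{t=1}^T\|\thetacl-\thetastar\|^2=\frac1T\sum_{t=1}^T\|\thetahat-\thetastar\|^2=\mathcal E_S$, which establishes \eqref{eq:better_independent}.

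I do not anticipate a genuine obstacle: the argument reduces to a one-line feasibility check once the right candidate for $\theta_g$ is spotted. The only subtlety is that $\theta_g$ is a free optimization variable, so the constraint can be made vacuous by placing $\theta_g$ at a true parameter rather than, say, at the centroid of the $\thetahat$'s; condition \eqref{eq:max_max_dif} is precisely the assertion that all sample means lie inside a common ball of radius $\epsilon$ centered at some $\theta_{t_0}^\star$, which is what licenses this choice. (It is not the tightest such condition -- the constraint is already inactive once $\epsilon$ exceeds the Chebyshev radius of $\{\thetahat\}_{t=1}^T$ -- but it is the one used in the statement.) Finally, since the proof only invokes the realized estimates $\thetahat$ and the deterministic quantities $\thetastar$, it holds path by path, matching the ``deterministic error'' wording.
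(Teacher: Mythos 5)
Your proof is correct, but it takes a genuinely different route from the paper. You argue that under \eqref{eq:max_max_dif} the point $(\hat\theta_1,\ldots,\hat\theta_T,\theta_{t_0}^\star)$ is feasible and attains the global minimum value zero of the objective, so every optimizer has $\thetacl=\thetahat$ and hence $\mathcal E_{CL}(\epsilon)=\mathcal E_S$; the stated inequality holds with equality. The paper instead proves the bound term by term: it first shows via Lemma \ref{lemma:thetaG} that $\thetage$ is a convex combination of the $\thetahat$, deduces \eqref{eqn:agnostic_condition} that every $\thetastar$ lies in $\mathcal B(\thetage,\epsilon)$, and then invokes Lemma \ref{lemma:projection} together with nonexpansiveness of the projection onto the ball to get $\|\thetastar-\thetacl\|\leq\|\thetastar-\thetahat\|$ for each $t$. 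Your argument is shorter and more elementary (no KKT/Lagrangian machinery, no projection characterization), and it is sharper as a statement about this regime: it exposes that \eqref{eq:max_max_dif} makes the coupling constraint slack at the optimum, so cross-learning simply collapses to the separable estimator — consistent with the fact that the strict improvement of Proposition \ref{prop:independent_vs_cl_estimator_strict} is obtained only on the set $\mathcal C(\epsilon)$, which is incompatible with \eqref{eq:max_max_dif}. What the paper's route buys is reusable infrastructure: the ball-membership fact \eqref{eqn:agnostic_condition}, the projection viewpoint of Figure \ref{fig:prueba_mse_projection}, and the per-task inequality \eqref{eq:better_independent_all_t} are all invoked again in the proof of Proposition \ref{prop:independent_vs_cl_estimator_strict}, where the projection is nontrivial and yields the strict $\epsilon^2/T$ gain; your feasibility argument, while cleaner here, does not produce those intermediate facts.
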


\begin{proof}
We will start by showing that \eqref{eq:max_max_dif} implies that the generators $\thetastar$ belong to the ball of center $\thetage$ and radius $\epsilon$, which we denote by $\mathcal{B}(\thetage,\epsilon)$, i.e., 
\begin{align}\label{eqn:agnostic_condition}
        ||\thetastar-\thetage||\leq\epsilon, \text{ for all }t.
\end{align}
For that purpose, we consider Lemma \ref{lemma:thetaG} in Appendix \ref{app:proofs}, which establishes that  $\thetage$ can be written as  a convex combination $\thetage = \sum_{t=1}^{T} \gamma_t \thetahat$ of the separable estimators, 
with coefficients $\gamma_t\geq 0$ such that $\sum_{t=1}^T \gamma_t=1$. It follows $\thetastar-\thetage=\sum_{\tau=1}^T \gamma_\tau (\thetastar-\hat \theta_\tau)$, and thus 
\begin{align}
||\thetastar- \thetage||&\leq\hspace{-2pt} \sum_{\tau=1}^T\hspace{-2pt}\gamma_\tau ||\thetastar- \hat \theta_\tau||\leq\hspace{-8pt} \max_{\tau=1,\ldots,T}||\thetastar- \hat \theta_\tau||\leq \epsilon,\label{eq:thruth_in_ball}
\end{align}
so that \eqref{eqn:agnostic_condition} holds for all $t=1,\ldots, T$.

\begin{figure}
	\centering
	\includegraphics[scale=1]{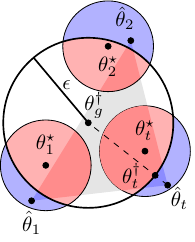}    
	\caption{Geometric argument for the proof of Proposition \ref{prop:independent_vs_cl_estimator}. The generators $\thetastar$ are inside the ball $\mathcal B(\thetage, \epsilon)$. The separate estimates $\thetahat$ are projected into $\mathcal B(\thetage, \epsilon)$ resulting in the cross-learning estimator $\thetacl$. Points $\thetahat$ in the red region will not be changed, so that $\thetahat=\thetacl$, and points in the blue region will be projected to the surface where the error to $\thetastar$ will be lower. }
	\label{fig:prueba_mse_projection}
\end{figure}
 Next, notice that by virtue of Lemma \ref{lemma:projection} in Appendix \ref{app:proofs}, the cross-learning estimator $\thetacl$ is the projection  $\thetacl=P_{\mathcal{B}}(\thetahat)$  of $\thetahat$ to the ball $\mathcal{B}:=\mathcal{B}(\thetage,\epsilon)$ of radius $\epsilon$ and center $\thetage$ (see Figure \ref{fig:prueba_mse_projection}). We have shown in \eqref{eq:thruth_in_ball} that all $\thetastar$  belong to $\mathcal{B}$. Thus, we have $P_{\mathcal{B}}(\thetastar)=\thetastar$, and since the projection is non-expansive we can bound 
\begin{align}
    ||\thetastar-\thetacl|| =  ||P_{\mathcal{B}}(\thetastar)-P_{\mathcal{B}}(\thetahat)||&\leq ||\thetastar-\thetahat ||,  \label{eq:better_independent_all_t}
\end{align}
 for all  $t=1,\ldots,T$. The desired result \eqref{eq:better_independent} follows from averaging both sides of   \eqref{eq:better_independent_all_t} across $t$. 
 \end{proof}
\begin{figure}[t]
    \centering
    \begin{subfigure}[b]{0.48\textwidth}
        \centering
	    \includegraphics[scale=1]{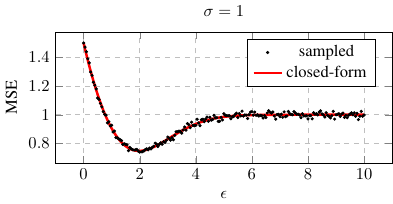}    
    \end{subfigure}
    \hfill
    \begin{subfigure}[b]{0.48\textwidth}
        \centering
	    \includegraphics[scale=1]{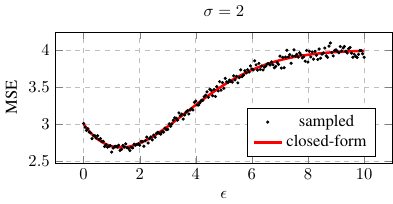}    
    \end{subfigure}
    \caption{Mean square error of the cross-learning estimate as a function of $\epsilon$.  The value of $\epsilon$ interpolates between consensus ($\epsilon=0$), and separate estimation ($\epsilon=\infty$). We present the case with small variance ($\sigma=1$) wherein the separable estimator outperforms the consensus one, and the case with large variance ($\sigma=2$) in which consensus is better than  separate estimation. In either case, the plots exhibit a value of $\epsilon\in(0,\infty)$ for which cross-learning outperforms both the consensus and separable estimators. }
    \label{fig:mse_sigma}
\end{figure}
\noindent\textbf{Example 2\label{example2}} 
To showcase Propositions \ref{prop:centralized_vs_cl} and \ref{prop:independent_vs_cl_estimator}, we present a controlled example in which  data come from a multivariate Gaussian distribution in $\reals^2$. For each task, we consider the centers at $[\mu,0],[-\mu,0], [0,\mu]$, and $[0,-\mu]$. In this case, the mean square error  is $\mu^2+\sigma^2/2$ for the consensus estimator and $2\sigma^2$ for the separable case. Figure \ref{fig:mse_sigma} shows the error $\mathcal E_{CL}(\epsilon)$ found in simulations for $\epsilon\in(0,10)$. There is a value of $\epsilon$ in Figure \ref{fig:mse_sigma} such that the cross-learning  estimator outperforms  both the consensus and separable estimators. This corresponds with  Proposition \ref{prop:centralized_vs_cl}, which proves that there exists a value $\epsilon$ such that the cross-learning estimator outperforms its consensus counterpart. It also lines up with  Proposition \ref{prop:independent_vs_cl_estimator}, which  shows that if $\epsilon\geq \epsilon_0=\max_{t}\max_\tau \|\thetastar-\hat\theta_\tau\|>0$, the mean square error of the cross-learning estimator is bounded from above by the mean square error of the separate counterpart. 

The comparison in Figure \ref{fig:mse_sigma}, however, looks better than our theoretical result in  Proposition \ref{prop:independent_vs_cl_estimator}, which only established a lower-than-or-equal type of bound. Thus, Proposition \ref{prop:independent_vs_cl_estimator}  did not guarantee the performance of the cross-learning estimator to be strictly better than the separate counterpart. To show that indeed the cross-learning estimator outperforms the separate one, we need yet another proposition. In this direction, consider the set 
\begin{align}\mathcal C(\epsilon)\hspace{-2pt}=\hspace{-2pt}\{\hspace{-1pt}(\hat\theta_1,\ldots,\hat\theta_T\hspace{-1pt})\hspace{-2pt}:\hspace{-2pt}\exists t\hspace{-2pt}\in\hspace{-2pt}\{1,\hspace{-1pt}\ldots,\hspace{-1pt}T\}  \hspace{-1pt}\text{ s.t. }\hspace{-1pt} \|\thetahat\hspace{-2pt}-\hspace{-2pt}\thetage\|\hspace{-2pt}\geq\hspace{-2pt} 3 \epsilon\}
\label{eq:set_C}. 
\end{align} 
with $\thetage$ being the centroid in    \eqref{eqn_perf_ana:cross_learning}. Proposition \ref{prop:independent_vs_cl_estimator_strict}, below,
establishes that the cross-learning presents a strictly lower error than the separable estimator when  $(\hat\theta_1,\ldots,\hat\theta_T)$ is chosen from the set   $ \mathcal C(\epsilon)$. Furthermore, the probability of $C(\epsilon)$ is strictly positive, which together with the uniform bound in Proposition \ref{prop:independent_vs_cl_estimator} is sufficient to prove that the mean squared error averaged across all datasets inside and outside $\mathcal C(\epsilon)$ is strictly lower when using cross-learning.   
 
\begin{proposition}\label{prop:independent_vs_cl_estimator_strict}
There exists a value $\epsilon_0>0$ such that for all $\epsilon>\epsilon_0$, if  $\epsilon$ and  $(\hat\theta_1,\ldots,\hat\theta_T)\in\mathcal C(\epsilon)$ are substituted in \eqref{eqn_perf_ana:cross_learning}, the error $\mathcal E_{CL}(\epsilon)$ of the resulting estimator $\thetacl$ is  bounded by   
\begin{align}  
\mathcal E_{CL}(\epsilon)\leq\mathcal E_{S}-\frac{1}{ T}\epsilon^2.\label{eq:strictly_better_independent}
\end{align}
\end{proposition}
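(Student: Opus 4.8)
The plan is to build on the projection picture already used in Proposition~\ref{prop:independent_vs_cl_estimator}. By Lemma~\ref{lemma:projection} the cross-learning estimator is $\thetacl=P_{\mathcal B}(\thetahat)$ with $\mathcal B=\mathcal B(\thetage,\epsilon)$, so every task whose estimate already lies in $\mathcal B$ is left untouched and contributes identically to $\mathcal E_S$ and $\mathcal E_{CL}(\epsilon)$. It therefore suffices to bound
\[
\Sigma:=\sum_{t}\big(\|\thetacl-\thetastar\|^2-\|\thetahat-\thetastar\|^2\big),
\]
whose only nonzero summands come from the \emph{projected} tasks (those with $\thetahat\notin\mathcal B$); proving $\Sigma\le-\epsilon^2$ then yields \eqref{eq:strictly_better_independent} after dividing by $T$.

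For a projected task $t$, put $\hat u_t=(\thetahat-\thetage)/\|\thetahat-\thetage\|$ and $\rho_t=\|\thetahat-\thetage\|-\epsilon>0$, so $\thetahat-\thetacl=\rho_t\hat u_t$. Expanding $\|\thetacl-\thetastar\|^2-\|\thetahat-\thetastar\|^2=\|\thetahat-\thetacl\|^2-2\langle\thetahat-\thetastar,\thetahat-\thetacl\rangle$ and using $\langle\thetahat-\thetastar,\hat u_t\rangle=\|\thetahat-\thetage\|+\langle\thetage-\thetastar,\hat u_t\rangle$ gives, for each projected $t$,
\[
\|\thetacl-\thetastar\|^2-\|\thetahat-\thetastar\|^2=-\rho_t^2-2\epsilon\rho_t-2\rho_t\langle\thetage-\thetastar,\hat u_t\rangle .
\]
The decisive step is to strip the last term of its dependence on the (random) location of $\thetage$. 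The reduced objective $\theta_g\mapsto\sum_t(\|\thetahat-\theta_g\|-\epsilon)_+^2$ is convex and $C^1$ (since $(\cdot)_+^2\in C^1$), so its minimizer $\thetage$ obeys the stationarity condition $\sum_t\rho_t\hat u_t=0$; this is exactly the content behind the convex-combination form of $\thetage$ in Lemma~\ref{lemma:thetaG}. Hence, for any fixed $c$, $\sum_t\rho_t\langle\thetage-c,\hat u_t\rangle=0$; taking $c=\bar\theta^\star:=\tfrac1T\sum_\tau\theta_\tau^\star$ and bounding $|\langle\bar\theta^\star-\thetastar,\hat u_t\rangle|\le\|\bar\theta^\star-\thetastar\|$, one arrives at
\[
\Sigma\le-\sum_t\rho_t^2-2\big(\epsilon-\max_t\|\thetastar-\bar\theta^\star\|\big)\sum_t\rho_t .
\]

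To conclude, set $\epsilon_0:=\max_t\|\thetastar-\bar\theta^\star\|$, a deterministic constant measuring how spread out the true parameters are. For every $\epsilon>\epsilon_0$ the linear term is nonpositive, so $\Sigma\le-\sum_t\rho_t^2\le0$ for \emph{every} realization, which also yields the bound $\mathcal E_{CL}(\epsilon)\le\mathcal E_S$ of Proposition~\ref{prop:independent_vs_cl_estimator}. If moreover $(\hat\theta_1,\dots,\hat\theta_T)\in\mathcal C(\epsilon)$, there is a task $t^\star$ with $\|\hat\theta_{t^\star}-\thetage\|\ge3\epsilon$; this task is projected and $\rho_{t^\star}\ge2\epsilon$, whence $\sum_t\rho_t^2\ge\rho_{t^\star}^2\ge4\epsilon^2\ge\epsilon^2$ and $\Sigma\le-\epsilon^2$, giving \eqref{eq:strictly_better_independent} (with, in fact, the sharper constant $4\epsilon^2/T$).

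The delicate point — and the reason this cannot be argued as in Proposition~\ref{prop:independent_vs_cl_estimator} — is that on $\mathcal C(\epsilon)$ the estimates are forced to be spread by more than $3\epsilon$, which is incompatible with hypothesis \eqref{eq:max_max_dif}; consequently the generators $\thetastar$ need \emph{not} lie in $\mathcal B$, and the non-expansiveness shortcut is simply unavailable. All the weight of the argument falls on the cancellation $\sum_t\rho_t\hat u_t=0$, which is what renders the change in error insensitive to the position of $\thetage$ and dependent only on the deterministic spread of the $\thetastar$; carrying this out carefully (establishing the stationarity cleanly from the $C^1$ structure, and checking that $t^\star$ is genuinely a projected task) is the crux. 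A minor case to dispatch is the possible non-uniqueness of $\thetage$ when no task is projected, but that situation is disjoint from $\mathcal C(\epsilon)$ and trivially gives $\Sigma=0$.
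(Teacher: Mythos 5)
Your proof is correct, but it follows a genuinely different route from the paper's. The paper defines $\epsilon_0$ through the (data- and $\epsilon$-dependent) condition $\|\thetastar-\thetage\|\leq\epsilon$, and then argues geometrically: for the far task in $\mathcal C(\epsilon)$ it combines the triangle inequality $\|\thetahat-\thetastar\|\geq 2\epsilon$ with the colinearity of the projection, $\thetahat-\thetacl=\mu_t(\thetahat-\thetage)$ and $\mu_t\geq 2/3$, to extract a per-task gain of $\epsilon^2$, while all remaining tasks are handled by the non-expansiveness bound \eqref{eq:better_independent_all_t}, which requires the generators to lie in $\mathcal B(\thetage,\epsilon)$. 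You instead compute the change in squared error exactly for each projected task, and then kill the $\thetage$-dependent cross term using the first-order optimality condition $\sum_t\rho_t\hat u_t=0$ of the centroid for the reduced objective $\sum_t\bigl(\|\thetahat-\theta_g\|-\epsilon\bigr)_+^2$ (which is indeed the partial minimization of \eqref{eqn_perf_ana:cross_learning} over $\{\theta_t\}$, is convex and $C^1$, and is consistent with Lemma \ref{lemma:thetaG}). This buys you three things the paper's argument does not give: a purely deterministic threshold $\epsilon_0=\max_t\|\thetastar-\bar\theta^\star\|$ that depends only on the spread of the ground truths (the paper's $\epsilon_0$ in \eqref{eq:epsilon0_theta} is defined through $\thetage$, hence through the data and through $\epsilon$ itself, which makes the ``for all $\epsilon>\epsilon_0$'' clause delicate); no need for the condition $\|\thetastar-\thetage\|\leq\epsilon$ at all, so unprojected tasks contribute exactly zero rather than being bounded via non-expansiveness; and a sharper constant, $4\epsilon^2/T$ in place of $\epsilon^2/T$. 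Your closing remark is also accurate: membership in $\mathcal C(\epsilon)$ is incompatible with hypothesis \eqref{eq:max_max_dif} of Proposition \ref{prop:independent_vs_cl_estimator} (that hypothesis forces $\|\thetahat-\thetage\|\leq 2\epsilon$ for all $t$), so the strict improvement cannot be obtained from that proposition alone; the paper circumvents this with the weaker ball condition on the generators, whereas you circumvent it with the stationarity cancellation. The degenerate no-projection case is correctly dismissed as disjoint from $\mathcal C(\epsilon)$.
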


\begin{proof}
For each dataset $(\hat\theta_1,\ldots,\hat\theta_T)$ and $\epsilon>0$ compute $\thetage$ via cross-learning, and define 
\begin{align}
\epsilon_0=\inf\{\epsilon>0: \ \|\theta^\star_t-\thetage\|\leq \epsilon  \text{ for all } t=1,\ldots,T \}.
\label{eq:epsilon0_theta}
\end{align}
We have shown in \eqref{eqn:agnostic_condition} that for any $\epsilon\geq \max_{t}\max_\tau \|\thetastar-\hat\theta_\tau\|$ as in \eqref{eq:max_max_dif}, the condition $\|\theta^\star_t-\thetage\|\leq \epsilon$ in  \eqref{eq:epsilon0_theta} is satisfied. Thus, the set in \eqref{eq:epsilon0_theta} is nonempty and the infimum in \eqref{eq:epsilon0_theta} is well defined.  Furthermore,  $\epsilon_0$  is strictly positive, since it depends on the pairwise distances between the ground truth parameters $\theta^\star_t$  which are assumed to be different one each other. Next, we consider the set 
$\mathcal C(\epsilon)$, defined in \eqref{eq:set_C} and depicted in Figure \ref{fig:prueba_mse_projection}, which collects (sufficient statistics of) datasets $(\hat\theta_1,\ldots,\hat\theta_T)$ such that at least one of its points $\hat \theta_t$  belongs to a (blue) zone outside of the ball $\mathcal{B}(\thetage, \epsilon)$. Projecting $\hat \theta_t$  into $\mathcal{B}(\thetage, \epsilon)$ results in a $\thetacl$ that is closer to $\theta^\star_t$. Following this intuition, we will show that \eqref{eq:strictly_better_independent} is satisfied for all  $(\hat\theta_1,\ldots,\hat\theta_T)\in\mathcal C(\epsilon)$. In this direction, we use the triangle inequality to write
\begin{align}
  3\epsilon&\leq   \|\hat\theta_t-\thetage\|=\|\thetahat-\thetastar+\thetastar-\thetage\|\\
    &\leq \|\thetahat-\thetastar\|+\|\thetastar-\thetage\| \leq \|\thetahat-\thetastar\| + \epsilon,
\end{align}
where we used \eqref{eq:epsilon0_theta} and  \eqref{eq:set_C}, valid for one  $\thetahat$. It follows that
\begin{align}
    \|\thetahat-\thetastar\|&\geq  2\epsilon\label{eq:diff_theta_r_star_theta_hat_r}.
\end{align}

On the other hand, since  $\thetacl$ is the projection of $\thetahat$ onto the ball $B(\thetage,\epsilon)$,  then  $\thetahat-\thetacl$ must  be co-linear with $\thetahat-\thetage$ such that  $\thetahat-\thetacl=\mu_t(\thetahat-\thetage)$. Hence, we can write  $\thetahat-\thetage=\thetahat-\thetacl+\thetacl-\thetage=\mu_t(\thetahat-\thetage)+\thetacl-\thetage$ or equivalently  $(1-\mu_t)(\thetahat-\thetage)=\thetacl-\thetage$.  Hence, we can bound 
\begin{align*}
    \|\thetacl-\thetastar\|&=\|\thetacl-\thetage+\thetage-\thetastar\| \\
    &=\|(1-\mu_t) (\thetahat- \thetage) + \thetage- \thetastar\|\\
    &\leq\|\thetahat- \thetastar\| +\mu_r (\|\thetage-\thetastar\|-\|\thetahat- \thetastar\|)\\
    &\leq \|\thetahat- \thetastar\| +\mu_r \left(\epsilon-2\epsilon\right)
    \end{align*}
where we used  \eqref{eq:epsilon0_theta} and \eqref{eq:diff_theta_r_star_theta_hat_r}.  Moreover, since according to \eqref{eq:set_C}  $\thetahat\notin \mathcal{B}(\thetage,\epsilon)$,  the projecting constraint must be active, yielding  $\epsilon=\|\thetacl-\thetage\|=(1-\mu_t)\|\thetahat-\thetage\|\geq 3\epsilon (1-\mu_t) $ according to \eqref{eq:set_C}, which implies  $\mu_t\geq 2/3$.  It follows,
\begin{align}    
    \|\thetacl-\thetastar\|&\leq \|\thetahat- \thetastar\| -\mu_r \epsilon\ \leq \|\thetahat- \thetastar\| -\frac{2}{3}\epsilon\label{eq:cota_mur_set_C}.
\end{align}
 To conclude the proof, we square both sides of   \eqref{eq:cota_mur_set_C} and substitute \eqref{eq:diff_theta_r_star_theta_hat_r} to obtain
\begin{align}
    \|\thetacl-\thetastar\|^2 
    &\leq \left(\|\thetahat- \thetastar\| - \frac{2\epsilon} {3}\right)^2\\
        &\leq \|\thetahat- \thetastar\|^2 -2 \frac{2\epsilon} {3}\|\thetahat- \thetastar\| + \left( \frac{2\epsilon} {3}\right)^2\\
    &\leq \|\thetahat- \thetastar\|^2 -2 \frac{2\epsilon} {3}\left(2\epsilon\right) + \left( \frac{2\epsilon} {3}\right)^2\\
    &\leq \|\thetahat- \thetastar\|^2 -\epsilon^2.\label{eq:strict_bound_r} 
\end{align}
The inequality \eqref{eq:strict_bound_r} is true for at least one $t\in\{1,\ldots,T\}$ such that $\|\thetahat-\thetage\|\geq 3$ as defined in $\mathcal C(\epsilon)$. For all other  $t\in\{1,\ldots,T\}$ we still have \eqref{eq:better_independent_all_t}. Recall that \eqref{eq:better_independent_all_t} relies on   $\|\theta^\star_t-\thetage\|\leq \epsilon$ in \eqref{eqn:agnostic_condition}, which holds for all $\epsilon>\epsilon_0$. Thus, we can average across $t$ and the slack $\epsilon^2$ in \eqref{eq:strict_bound_r} will reduce by a factor $T$ which results in   \eqref{eq:strictly_better_independent}.
\end{proof}
If we can show that $\mathcal C(\epsilon)$ has positive probability, then the cross-learning estimator will have a strictly lower mean-square error than the separable one. 
We will follow that path in establishing that the cross-learning estimator outperforms the separate and consensus ones in our main result.  
\begin{theorem}\label{theorem:lower_mse}
    Consider a set of deterministic parameters $\thetastar$,  $t=1,\ldots,T$  and their associated datasets collecting data corrupted by zero-mean  Gaussian noise $\nu_{tn}\sim  \mathcal N(0,\sigma)$ according to the additive model $ y_{tn}= \thetastar+\nu_{tn},\ t=1,\ldots,T,\ n=1,\ldots, N_t$.   Define the  square errors $\mathcal E_{S}$, $\mathcal E_{C}$, and $\mathcal E_{CL}(\epsilon)$, as in \eqref{eqn_mseI_def}, \eqref{eqn_mseC_def}, and \eqref{eqn_mseCL_def} by quantifying the error between a single ground truth parameter $\thetastar$ and the solutions of the separable \eqref{eqn:hatP_indepedent}, consensus \eqref{eqn:hatP_centralized}, and cross-learning \eqref{eqn_perf_ana:cross_learning} estimators, respectively. Under these definitions, we have
      \begin{align}
&\mathbb{E}\left[\inf_{\epsilon>0} \mathcal E_{CL}(\epsilon)-\mathcal E_{C}\right]<0,\label{eqn:better_than_C}\\
&\mathbb{E}\left[\inf_{\epsilon>0} \mathcal E_{CL}(\epsilon)-\mathcal E_{S}\right]< 0.\label{eqn:better_than_I}
\end{align}
\end{theorem}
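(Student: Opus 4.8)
The plan is to prove the two displayed inequalities \eqref{eqn:better_than_C} and \eqref{eqn:better_than_I} separately, and in each case to remove the infimum from inside the expectation by bounding $\inf_{\epsilon>0}\mathcal E_{CL}(\epsilon)$ pointwise — that is, for each fixed realization of the noise — by $\mathcal E_{CL}$ evaluated at one conveniently chosen value of $\epsilon$, and only then taking expectations. Before anything else I would record the structural facts that make both left-hand sides well defined: for fixed data, $\epsilon\mapsto\mathcal E_{CL}(\epsilon)$ is continuous, $\mathcal E_{CL}(0)=\mathcal E_C$, and $\mathcal E_{CL}(\epsilon)\to\mathcal E_S$ as $\epsilon\to\infty$ because the constraint in \eqref{eqn_perf_ana:cross_learning} becomes inactive so that $\thetacl\to\thetahat$; hence $0\le\inf_{\epsilon>0}\mathcal E_{CL}(\epsilon)\le\mathcal E_S$, and $\inf_{\epsilon>0}\mathcal E_{CL}(\epsilon)=\inf_{\epsilon\in\mathbb{Q}_{>0}}\mathcal E_{CL}(\epsilon)$ is measurable and integrable.

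For \eqref{eqn:better_than_C} I would invoke Proposition~\ref{prop:centralized_vs_cl} directly. Since $g(\epsilon):=\mathbb{E}\left[\mathcal E_{CL}(\epsilon)-\mathcal E_C\right]$ satisfies $g(0)=0$ and $\lim_{\epsilon\to0^+}g(\epsilon)/\epsilon<0$, there is a \emph{fixed} $\epsilon_1>0$ with $g(\epsilon_1)<0$. The pointwise bound $\inf_{\epsilon>0}\mathcal E_{CL}(\epsilon)\le\mathcal E_{CL}(\epsilon_1)$ then yields, upon taking expectations, $\mathbb{E}\left[\inf_{\epsilon>0}\mathcal E_{CL}(\epsilon)-\mathcal E_C\right]\le g(\epsilon_1)<0$, which is exactly \eqref{eqn:better_than_C}.

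For \eqref{eqn:better_than_I} the pointwise comparison $\inf_{\epsilon>0}\mathcal E_{CL}(\epsilon)\le\mathcal E_S$ is immediate from Proposition~\ref{prop:independent_vs_cl_estimator} (or from the limit $\mathcal E_{CL}(\epsilon)\to\mathcal E_S$). To upgrade this to a strict inequality in expectation it suffices to exhibit an event $E$ with $P(E)>0$ and a constant $c>0$ such that $\inf_{\epsilon>0}\mathcal E_{CL}(\epsilon)\le\mathcal E_S-c$ on $E$: splitting $\mathbb{E}\left[\inf_{\epsilon>0}\mathcal E_{CL}(\epsilon)-\mathcal E_S\right]$ over $E$ and $E^{c}$ and using the pointwise bound on $E^{c}$ gives $\mathbb{E}\left[\inf_{\epsilon>0}\mathcal E_{CL}(\epsilon)-\mathcal E_S\right]\le -c\,P(E)<0$. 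Proposition~\ref{prop:independent_vs_cl_estimator_strict} supplies the per-realization gap: on any realization admitting some $\epsilon\ge\delta$ (for a fixed $\delta>0$) such that $\thetastar\in\mathcal B(\thetage(\epsilon),\epsilon)$ for all $t$ and $(\hat\theta_1,\dots,\hat\theta_T)\in\mathcal C(\epsilon)$ of \eqref{eq:set_C}, one has $\mathcal E_{CL}(\epsilon)\le\mathcal E_S-\epsilon^2/T\le\mathcal E_S-\delta^2/T$. Taking $E$ to be the set of such realizations and $c=\delta^2/T$ completes the reduction.

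The main obstacle is showing $P(E)>0$, i.e.\ that with positive probability the separate estimates are ``spread out'' so that, for some $\epsilon$ bounded away from zero, the induced global parameter $\thetage(\epsilon)$ stays within $\epsilon$ of every $\thetastar$ (the agnostic condition \eqref{eqn:agnostic_condition}, which both makes $\epsilon_0$ finite and renders the projection non-expansive toward each $\thetastar$) while at least one $\thetahat$ lies outside $\mathcal B(\thetage(\epsilon),\epsilon)$. I would argue this constructively: fix $\epsilon$ large relative to the diameter of $\{\thetastar\}_{t=1}^T$ and displace one estimate, say $\hat\theta_1$, far from the sample centroid along a direction chosen — using the convex-combination representation of $\thetage$ from Lemma~\ref{lemma:thetaG} — so that the optimizer $\thetage(\epsilon)$ of \eqref{eqn_perf_ana:cross_learning} is dragged to the boundary of the intersection of the remaining balls while still lying inside $\mathcal B(\thetastar,\epsilon)$ for every $t$. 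For this explicit configuration both $\mathcal C(\epsilon)$ and \eqref{eqn:agnostic_condition} hold with strict inequalities; since the solution map of the convex program \eqref{eqn_perf_ana:cross_learning} is continuous there, the strict inequalities persist on an open neighborhood of the corresponding noise vector, which has positive probability because the Gaussian density is everywhere positive. Along the way one must also dispatch the measurability of $\epsilon_0$ and of $\inf_{\epsilon>0}\mathcal E_{CL}(\epsilon)$, and the possible non-uniqueness of $\thetage$ when the constraint is slack — the latter harmless since the bound of Proposition~\ref{prop:independent_vs_cl_estimator_strict} uses only the projection geometry, or alternatively one restricts to the open set where $\thetage$ is unique.
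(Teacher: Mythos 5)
Your overall architecture is sound and, modulo one step, is the same as the paper's: you get \eqref{eqn:better_than_C} by fixing a single $\epsilon_1>0$ with $\mathbb{E}[\mathcal E_{CL}(\epsilon_1)-\mathcal E_C]<0$ (available from Proposition~\ref{prop:centralized_vs_cl}) and using $\inf_{\epsilon>0}\mathcal E_{CL}(\epsilon)\le\mathcal E_{CL}(\epsilon_1)$ pointwise, and you get \eqref{eqn:better_than_I} from the pointwise bound of Proposition~\ref{prop:independent_vs_cl_estimator} plus a uniform gap, via Proposition~\ref{prop:independent_vs_cl_estimator_strict}, on an event $E$ of positive probability; this even avoids the paper's case split on whether $\mathbb{E}[\mathcal E_C]$ or $\mathbb{E}[\mathcal E_S]$ is smaller, and your $E$/$E^c$ decomposition makes explicit what the paper leaves implicit. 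The genuine gap is in the one step that is actually delicate: the claim that $P(E)>0$, i.e.\ that with positive probability there is an $\epsilon\ge\delta$ for which \eqref{eqn:agnostic_condition} holds \emph{and} $(\hat\theta_1,\dots,\hat\theta_T)\in\mathcal C(\epsilon)$.

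Your explicit configuration --- displace a single estimate $\hat\theta_1$ far away while the remaining estimates sit at (or arbitrarily near) their true parameters --- cannot satisfy both conditions, for any choice of direction. Let $K=\bigcap_{t\ge2}\mathcal B(\hat\theta_t,\epsilon)$. To have $\|\hat\theta_1-\thetage\|\ge3\epsilon$ the far estimate must lie well outside $K$, and then at any point of the boundary of $K$ the objective of \eqref{eqn_perf_ana:cross_learning} (with the $\theta_t$ minimized out, $\sum_t(\|\thetahat-\theta_g\|-\epsilon)_+^2$) decreases \emph{linearly}, at rate $2(\|\hat\theta_1-\theta_g\|-\epsilon)>0$, when $\theta_g$ moves toward $\hat\theta_1$, while the penalty for leaving the other balls grows only quadratically; hence the optimal $\thetage$ lies strictly outside $K$, i.e.\ strictly farther than $\epsilon$ from some $\hat\theta_{t'}=\theta_{t'}^\star$, so \eqref{eqn:agnostic_condition} fails (in the collinear case with $T-1$ coincident estimates one gets explicitly $\|\thetage-\hat\theta_{t'}\|=\epsilon+(R-2\epsilon)/T>\epsilon$). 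Consequently the base noise vector to which you apply the continuity/full-support argument does not satisfy the required strict inequalities, and the open-neighborhood step has nothing to propagate. The fix is to perturb \emph{several} estimates in balanced directions so that the centroid does not move --- this is exactly what the paper's informal ``separate the points $\hat\theta_t$ keeping $\thetage$ unchanged'' is doing. For instance, with $T=2$ and nearly coincident ground truths, take $\hat\theta_1\approx\theta_1^\star+4\epsilon u$ and $\hat\theta_2\approx\theta_2^\star-4\epsilon u$: by symmetry $\thetage$ stays at the ground-truth cluster, \eqref{eqn:agnostic_condition} holds strictly, and $\|\thetahat-\thetage\|\approx4\epsilon>3\epsilon$ for both $t$; these strict inequalities persist on an open set of noise vectors, which has positive Gaussian probability, after which your reduction to $\mathbb{E}[\inf_{\epsilon>0}\mathcal E_{CL}(\epsilon)-\mathcal E_S]\le-cP(E)<0$ goes through unchanged.
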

\begin{proof}
We start by comparing the errors yield by consensus and by the separate estimator, dividing the proof in two complementary cases $\mathbb{E}\left[\mathcal E_{C}\right]\leq\mathbb{E}\left[\mathcal E_{S}\right]$ or $\mathbb{E}\left[\mathcal E_{C}\right]> \mathbb{E}\left[\mathcal E_{S}\right]$. Starting with the case $\mathbb{E}\left[\mathcal E_{C}\right]<\mathbb{E}\left[\mathcal E_{S}\right]$, we established in Proposition \ref{prop:centralized_vs_cl} that there exists an $\epsilon>0$ such that \eqref{eqn:better_than_C} holds. Since we assumed $\mathbb{E}\left[\mathcal E_{C}\right]<\mathbb{E}\left[\mathcal E_{S}\right]$, then \eqref{eqn:better_than_I} must also hold. 
In the case  $\mathbb{E}\left[\mathcal E_{S}\right]<\mathbb{E}\left[\mathcal E_{C}\right]$, we will first argue that for any dataset, we can find a value of $\epsilon$ such that $\mathcal E_{CL(\epsilon)}- \mathcal E_S$  is lower than or equal to zero. But, this is a direct consequence of Proposition \ref{prop:independent_vs_cl_estimator}, since the right-hand side of \eqref{eq:max_max_dif} only depends on the dataset. Therefore, \eqref{eqn:better_than_I} holds, and since we are considering the case $\mathbb{E}\left[\mathcal E_{S}\right]<\mathbb{E}\left[\mathcal E_{C}\right]$, then \eqref{eqn:better_than_C} must hold. To ensure that the inequality in  \eqref{eqn:better_than_I} is strict we argue that the set $\mathcal C(\epsilon)$ in \eqref{eq:set_C} is nonempty because we can separate the points $\hat\theta_t$ away in Fig. \ref{fig:prueba_mse_projection} keeping $\thetage$ unchanged, so that $\|\thetahat-\thetage\|$ surpasses $3\epsilon$ in \eqref{eq:set_C}.  Furthermore, if we allow $\thetage$ to move infinitesimally, we can move the points $\thetahat$ in a set $\mathcal C(\epsilon)$ with positive volume, so that the probability of $\mathcal C(\epsilon)$ is positive under our Gaussian data model. This positive probability together with \eqref{eq:strictly_better_independent} yields \eqref{eq:better_independent}, concluding the proof.     
\end{proof}

The strict inequalities in Theorem \ref{theorem:lower_mse}  mean that by selecting the value of $\epsilon$ properly (possibly depending on the dataset samples $\{y_{tn}\}$) the cross-learning estimator outperforms its separable and consensus counterparts. Through the supporting Propositions \ref{prop:centralized_vs_cl} and \ref{prop:independent_vs_cl_estimator}, which are described by Figures \ref{fig:proof_symmetry} and \ref{fig:prueba_mse_projection} respectively, we recognize that the cross-learning constraint can simultaneously reduce the bias affecting the consensus estimator, and mitigate the higher variance of a separate result, attaining an optimal balance in between.  Although these theoretical findings were formally established within a controlled framework with Gaussian data, this intuition about how the cross-learning method achieves this optimal balance in the variance-bias tradeoff generalizes to more complex scenarios. Specifically, this balance carries out to  the  experiments of  Section \ref{subsec:NumericalCovid} which are performed on real data. Before presenting these experiments, we propose a variant of cross-learning that is preferable when the model outputs, as opposed to their parameters, are assumed to be close to one another.
\section{Cross-Learning with Coupled Outputs}
\label{sec:functional}

Our cross-learning formulation \eqref{prob:CL} so far has put emphasis on the model parameters. While there are many problems where obtaining the parameters of a model is the primary goal, in most cases, they are just a vehicle to obtain a regression or classification output. Particularly when neural networks are involved, the optimal values of the filter taps provide little to no intuition into the phenomenon being learned. Moreover, close distances between these parameters may not reflect similar outputs to the same inputs.  In these cases, it may be reasonable to leverage similarities in the model  outputs themselves, as opposed to their parameters, for which we introduce an alternative cross-learning formulation with functional constraints
\begin{align}
	\{\thetacl\},\thetage=\underset{\{\theta_t\},\theta_g}{\arg\min}  \quad   &  \frac{1}{T}\sum_{t=1}^T\frac{1}{N_t}\sum_{i=1}^{N_t} \ell \left(y_i,f(x_i,\theta_t)\right)] \label{prob:CL_functional}  \tag{$\text{P}_{\text{CLF}}$}
	\\
	\text{subject to}
    \quad   &\frac{1}{N_t}\sum_{i=1}^{N_t} | f(x_i,\theta_t)-f(x_i,\theta_g)| \leq \epsilon,\nonumber
\end{align}
with $\ell:\reals ^Q\times\reals^Q\to\reals_+,\ \ell(y_1,y_2)=0 $ iif $y_1=y_2$, being a non-negative loss function that measures the quality of learning. As before, we fit the parameters $\theta_t$ of the models $f(x,\theta_t)$ jointly across tasks, but now the model outputs are coupled via the functional constraints.

While we will not provide a formal analysis of this case, we claim that this cross-learning formulation \eqref{prob:CL_functional} with functional constraints comes to also exploit the bias-variance trade-off inherent to the problem. We support this claim with the experiments of Section \ref{ssec:image_Classification}, and by relating \eqref{prob:CL_functional} to the parametric case \eqref{prob:CL}. Specifically, we notice that  \eqref{prob:CL_functional} is a relaxation of \eqref{prob:CL}, as stated by the following proposition.
\begin{proposition}\label{prop:relaxationset}
If the model $f(x,\theta)$ is $L$-Lipschitz in the parametrization, i.e., $| f(x,\theta_t)-f(x,\theta_g)| \leq L |\theta_t - \theta_g|$, then the feasible sets $\mathcal{C}_{CL}(\epsilon)$ and $\mathcal{C}_{CLF}(\epsilon)$ for the cross-learning estimators \eqref{prob:CL} and \eqref{prob:CL_functional} with coupled parameters and constraints, respectively, satisfy $\mathcal{C}_{CL}(\epsilon) \subseteq \mathcal{C}_{CLF}(L\epsilon)$. 
\end{proposition}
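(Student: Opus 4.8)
The plan is to verify the set inclusion directly from the definitions of the two feasible sets together with the pointwise Lipschitz bound. Recall that the feasible set $\mathcal{C}_{CL}(\epsilon)$ of \eqref{prob:CL} consists of the tuples $(\{\theta_t\}_{t=1}^T,\theta_g)$ satisfying $\|\theta_t-\theta_g\|\le\epsilon$ for every $t=1,\ldots,T$, whereas the feasible set $\mathcal{C}_{CLF}(\epsilon)$ of \eqref{prob:CL_functional} consists of the tuples satisfying $\frac{1}{N_t}\sum_{i=1}^{N_t}|f(x_i,\theta_t)-f(x_i,\theta_g)|\le\epsilon$ for every $t$.

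First I would fix an arbitrary $(\{\theta_t\},\theta_g)\in\mathcal{C}_{CL}(\epsilon)$ and a task index $t$, so that $\|\theta_t-\theta_g\|\le\epsilon$. Invoking the $L$-Lipschitz hypothesis at each design point $x_i$ of $\mathcal{D}_t$ yields $|f(x_i,\theta_t)-f(x_i,\theta_g)|\le L\|\theta_t-\theta_g\|\le L\epsilon$ for all $i=1,\ldots,N_t$. Averaging these $N_t$ inequalities over $i$, and using that the mean of quantities each bounded by $L\epsilon$ is itself bounded by $L\epsilon$, gives $\frac{1}{N_t}\sum_{i=1}^{N_t}|f(x_i,\theta_t)-f(x_i,\theta_g)|\le L\epsilon$. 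Since $t$ was arbitrary, the tuple meets the constraint of \eqref{prob:CL_functional} with radius $L\epsilon$, i.e. $(\{\theta_t\},\theta_g)\in\mathcal{C}_{CLF}(L\epsilon)$; and since the feasible point was arbitrary, this establishes $\mathcal{C}_{CL}(\epsilon)\subseteq\mathcal{C}_{CLF}(L\epsilon)$.

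There is essentially no obstacle here: the statement is a one-line consequence of Lipschitz continuity and monotonicity of the average. The only points worth flagging are that the Lipschitz bound is applied separately (and uniformly in $x$) at each sample $x_i$, and that because the functional constraint is an empirical \emph{average} rather than a supremum, the resulting dilation factor is exactly $L$, with no dependence on $N_t$ or on the particular realizations of the data.
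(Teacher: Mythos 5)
Your proposal is correct and follows essentially the same argument as the paper: fix a feasible point of \eqref{prob:CL}, apply the $L$-Lipschitz bound at each sample $x_i$ to get $|f(x_i,\theta_t)-f(x_i,\theta_g)|\leq L\|\theta_t-\theta_g\|\leq L\epsilon$, and average over $i$ to conclude membership in $\mathcal{C}_{CLF}(L\epsilon)$. No gaps; your remark that the average (rather than a supremum) keeps the dilation factor at exactly $L$ independent of $N_t$ is a fair observation, though the paper's proof makes the same point implicitly.
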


\begin{proof}
Let $(\theta_t, \theta_g) \in \mathcal{C}_{CL}(\epsilon)$ such that
\begin{align}
    \mathcal{C}_{CL}(\epsilon)&=\left\{ (\theta_t, \theta_g) : \|\theta_t - \theta_g\| \leq \epsilon \right\}.\label{eq:CLconstraintset}
\end{align}
hence,  $\|\theta_t - \theta_g\| \leq \epsilon$ holds. Furthermore, by application of the Lipschitz property of the model function $f$, we have that
\begin{align}
\|f(x_i, \theta_t) - f(x_i, \theta_g)\| \leq \| \theta_t - \theta_g \| \leq L \epsilon
\end{align}
Averaging over the samples $i=1,\ldots,N_t$ we obtain the bound 
$\frac{1}{N_t}\sum_{i=1}^{N_t}\|f(x_i, \theta_t) - f(x_i, \theta_g)\| \leq \frac{1}{N_t}\sum_{i=1}^{N_t} L \epsilon = L \epsilon$.
This implies $(\theta_t, \theta_g)\in  \mathcal{C}_{CLF}(L\epsilon)$ since according to \eqref{prob:CL_functional}, 
  $\mathcal{C}_{CLF}(\epsilon)\hspace{-2pt}=\hspace{-2pt}\{(\theta_t, \theta_g)\hspace{-2pt}:\frac{1}{N_t}\sum_{i=1}^{N_t}|f(x_i,\theta_t)-f(x_i,\theta_g)|\hspace{-2pt}\leq\hspace{-2pt}\epsilon\}$. 
\end{proof}

The inclusion $\mathcal{C}_{CL}(\epsilon) \subseteq \mathcal{C}_{CLF}(L\epsilon)$ means that \eqref{prob:CL_functional} imposes a \emph{weaker} coupling between task-specific parameters and the shared representation compared to the parametric problem. Consequently, while both formulations promote cross-task information sharing, the output constraint does so in a more flexible manner. In other words, \eqref{prob:CL_functional} allows for greater variability among task-specific parameters as long as the corresponding function outputs remain sufficiently close.
In this case, the solution of \eqref{prob:CL_functional} with $\epsilon=0$ does not imply strict model consensus, since the parameters could differ across tasks and the outputs could also differ for inputs not seen during training. As $N_t$ grows and the training set becomes more representative,  the models reach consensus, which introduces bias as described in previous sections. If $N_t$ reduces, we still expect a residual bias under this weaker broad-sense consensus enforced by \eqref{prob:CL_functional} with $\epsilon=0$. On the other hand, when $\epsilon\to \infty$ the conclusion is more direct. That is, we recover the fully separable case and its higher variance as we did when coupling the parameters. As in the parametric case, the cross-learning estimator, in this new form with coupled outputs, sets itself in between these consensus and separable counterparts.

\section{Dual Algorithms}
\label{sec:DualDomain}
Upon presenting the cross-learning estimators \eqref{prob:CL} and \eqref{prob:CL_functional} as formulations to better balance the variance-bias trade-offs, in this section we propose two methods to solve their corresponding optimization problems.

\subsection{Dual Algorithm for Parametric Constraints}

Our first algorithm is built using the Alternating Direction Method of Multipliers (ADMM) \cite{boyd2011distributed}. This formulation is suitable to solve the optimization problem \eqref{prob:CL} that defines the  cross-learning estimator with parametric constraints. In the case in which an algorithm is available to solve the separable problem \eqref{prob:MTL_independent_EMP}, this ADMM formulation allows us to solve  the cross-learning problem \eqref{prob:CL} using the separable solution as a building block. We want to obtain the optimal cross-learning parameters  $\thetacl$ for each of the tasks $t=1,\ldots,T$. To reduce notation, let us define the  $\theta = (\theta_1,\ldots,\theta_T,\theta_g)$ containing all optimization variables and write the loss in terms of $\theta$ and the data $(x_t,y_t)$ for task $t$  as 
\begin{align}
\ell(\theta)&=\sum_t \ell\left(y_t, f(x_t,\theta_t)\right)=\sum_t ||y_t-f(x_t,\theta_t)||^2
\end{align}
Under this notation, the cross-learning problem to solve is 
\begin{align}\min_{\theta} &\ \ell(\theta),\ 
\text{subject to}\ \theta \in \mathcal C \label{eq:probCL_ADMM}
\end{align}
with $\mathcal C=\feasible$ being the feasible set defined in \eqref{eq:CLconstraintset}.
By introducing the barrier function 
\begin{align}C(\theta)= \begin{cases}0 & \theta \in \mathcal C \\ \infty & \theta \notin \mathcal C\end{cases}\end{align}
together with an auxiliary variable $z$, we can write \eqref{eq:probCL_ADMM} as 
\begin{align}\min_{\theta ,z} &\ \ell(\theta)+C(z),\ \label{eq:consensus_cl}
\text{subject to } \theta=z 
\end{align}
Written as \eqref{eq:consensus_cl}, we can solve cross-learning  via the ADMM
\begin{align}\theta^{k+1}&=\operatorname{Prox}_{\lambda_k \ell}\left(z^k-u^k \right)\label{eq:proxloss}\\
z^{k+1}&= P_{\mathcal C}\left(\theta^{k+1}+u^k\right)\label{eq:admm_z} \\
u^{k+1}&=u^k+{v^{k+1}-z^{k+1}}\end{align}
with  the proximal operator of the loss in \eqref{eq:proxloss}  defined by 
\begin{align}
\theta^{k+1}\hspace{-2pt}=\hspace{-1pt}\underset{\theta}{\arg\min}
&\hspace{-2pt}\sum_t\hspace{-2pt} \ell(y_t,f(x_t,\theta_t))
\hspace{-2pt}+\hspace{-2pt}\frac{1}{2 \lambda_k}\|\theta\hspace{-1pt}-\hspace{-2pt}z^k\hspace{-2pt}+\hspace{-2pt}u^k\|^2\label{eq:proximal_norm}
\end{align}
where $\lambda_k=\lambda_0 \Gamma^{k}$ is a weighting parameter that progressively accentuates the effect of the loss over the  quadratic  penalty.
  
\begin{algorithm}[t]
	\caption{Cross-Learning with Parametric Constraints}
	\label{alg:AlgorithmParametric}
	\begin{algorithmic}[1]
		\State Initialize: $\theta^0, z^0=0,$ and $u^0=0$ 
		\For{$k = 0, 1, 2, \ldots , k_{max}$}
		    \State Compute $\lambda_k = \lambda_0 \Gamma^{k \bmod K}$
			\For{each task $t = 1,\dots,T$}
				\State $\theta_t^{k+1} = \arg\min_{\theta} \ell_{\lambda_k}(y_t,f(x_t,\theta),z_t,u_t) $
			\EndFor
			\State $\theta_g^{k+1} = z_g^k - u_g^k$
			\State $z^{k+1} = P_{\mathcal{C}}(\theta^{k+1} + u^k)$
			\State $u^{k+1} = u^k + \theta^{k+1} - z^{k+1}$
		\EndFor
	\end{algorithmic}
\end{algorithm}
 
To take full advantage of the ADMM, we expand the penalty  
$\|\theta-z^k+u^k\|^2=\sum_{t=1}^T \|\theta_t-z_t^k+u_t^k\|^2+\|\theta_g-z_g^k+u_g^k\|^2$ 
with $z_t^k$ and  $u_t^k$  denoting the elements of $z^k$ and $u^k$, respectively,  corresponding to task $t$, and  $z_g^k$ and  $u_g^k$ being the ones corresponding to the centroid. Such an expansion facilitates a distributed solution of  \eqref{eq:proximal_norm} in which the parameters $\theta_t$ for task $t$  are obtained by solving      
\begin{align}
\theta^{k+1}_{t}&=\underset{\theta}{\arg\min} \ \ell_{reg}(y_t,f(x_t,\theta_t),z_t,u_t) \label{eq:ellreg}
\end{align}
with the  regularized loss   defined by 
$\ell_{\lambda_k}(y_t,f(x_t,\theta_t),z_t,u_t)=\ell(y_t,f(x_t,\theta_t))
+\frac{1}{2 \lambda_k}\|\theta_{t}-z_{t}^k+u_{t}^k\|^2$.

The centroid, in turns, is only involved in the penalty in \eqref{eq:proximal_norm}. Hence, it admits the  closed-form $\theta_g=z_{g}-u_{g}$. This describes the entire procedure, which is  summarized in Algorithm \ref{alg:AlgorithmParametric}.

\subsection{Dual Algorithm for Coupled Outputs}
Next, we construct a primal-dual algorithm   to solve \eqref{prob:CL_functional}. With $\lambda_t>0$ denoting the dual variable associated with domain $t$, and $ \lambda=[\lambda_1,\dots,\lambda_T]^T$, the Lagrangian associated with  \eqref{prob:CL_functional} takes the form
\begin{align}
	 L(\theta_t,\theta_g,\lambda)&= \frac{1}{T}\sum_{t=1}^T \frac{1}{N_t} \sum_{i=1}^{N_t} \ell \left(y_i,f(x_i,\theta_t)\right) \nonumber\\ &+\lambda_t\bigg(\frac{1}{N_t} \sum_{i=1}^{N_t}| f(x_i,\theta_t)-f(x_i,\theta_g)|- \epsilon\bigg).
\end{align}
We define the dual function for problem \eqref{prob:CL_functional}  as 
	$d(\lambda) := \min_{\theta_t,\theta_g} L (\theta_t,\theta_g,\lambda)$, and the corresponding convex dual problem as the maximization of $d(\lambda)$ in the positive orthant, that is, $D^\star_{CLF} :=\max_{\{\lambda_t\geq 0\}} d(\lambda)$ \cite{boyd2009convex}.

Problem $D_{CLF}^\star$ can be solved \cite{27Chamon2020,28Chamon2020} by alternating gradient steps on the Lagrangian and the dual function. Upon selecting step-sizes $\eta_P>0$ and $\eta_D>0$, and initializing the parameters $\{\theta_t^0\},\theta_g^0$, and the dual variables $\lambda_t^0$, we  update the parameters by taking a gradient descent step on $L(\theta_t,\theta_g,\lambda)$,
\begin{align}
\theta^{k+1}_t &= \theta^k_t -\eta_P \nabla_{\theta_t}  L(\theta_t,\theta_g,\lambda)\text{ for all } t\in[1,\dots,T],\\
\theta^{k+1}_g &= \theta^t_g -\eta_P \nabla_{\theta_g} L(\theta_t,\theta_g,\lambda),
\end{align}
and then  the multipliers with a gradient ascent step over  $d(\lambda)$ 
\begin{align*}
\lambda^{k+1}_t = \bigg[\lambda^{k}_t + \eta_D\bigg(\frac{1}{N_t} \sum_{i=1}^{N_t}| f(x_i,\theta_t)-f(x_i,\theta_g)|- \epsilon \bigg)\bigg]_+
\end{align*}
for $t=1,\ldots T$, projecting the result into the nonnegative orthant via $[\cdot]_+=\max\{0,\cdot\}$. The overall  is detailed in Algorithm \ref{alg:Algorithm}. We refer the reader to \cite{26Chamon2022} for a proof of convergence in a stochastic setup.

\begin{algorithm}[t]
	\caption{Cross-Learning Functional Algorithm}
	\label{alg:Algorithm}
	\begin{algorithmic}[1]
		\State Initialize models $\{\theta^0_t\},\theta^0_g$, and dual variables $\lambda = 0$
		\For {epochs $e=1,2,\dots$}
		\For {batch $i$ in epoch $e$}
		\State Update params. $\theta_t^{k+1}=\theta_t^{k}-\eta_P \hat\nabla_{\theta_t} L (\theta_t,\theta_g,\lambda)\forall \ t$
		\State Update $g$ params. $\theta_g^{k+1}=\theta_g^{k}-\eta_P \hat\nabla_{\theta_g} L (\theta_t,\theta_g,\lambda)$
		\EndFor
		\State Update dual variable for all $t\in[1,\dots,T]$\\ 
    $\lambda_t^{k+1} = \biggl[\lambda^{k}_t + \eta_D\bigg(\frac{1}{N_t} \sum_{i=1}^{N_t}| f(x_i,\theta_t)-f(x_i,\theta_g)|- \epsilon \bigg)\biggl]_{+}$ 
		\EndFor
	\end{algorithmic}
\end{algorithm}

\section{Numerical Examples}

In this section, we test our cross-learning estimators on real-world data in two scenarios: a time-series prediction task using COVID-19 epidemiological data \cite{owid-coronavirus} and an image classification task using the Office-Home dataset \cite{venkateswara2017deep}.

\subsection{COVID-19 SIR Model Fitting}\label{subsec:NumericalCovid}
In this section, we formulate an SIR fitting problem for the COVID-19 pandemic data obtained from the OWID database \cite{owid-coronavirus}. Although complex models are often required to capture the full dynamics of an epidemic, we use a simple, interpretable, bi-parametric SIR model. This choice will allow for a clear interpretation of our cross-learning method. The SIR model is described by a system of  differential equations 
\begin{align}
\frac{dS}{d\tau}=-\frac{\beta}{N} S I, \quad
\frac{dI}{d\tau}=\frac{\beta}{N} S I - \gamma I, \quad
\frac{dR}{d\tau}= \gamma I,\label{eq:sir_model}
\end{align}
where variables $S$, $I$, and $R$ stand for the number of susceptible, infected, and removed individuals, respectively, which evolve with time $\tau$, and the constant $N=S+I+R$ stands for the initial total population.

We consider the multiple tasks of predicting the infection time series $(S_t,I_t,R_t)$ for $T$ different countries. 
The loss $\ell$ is defined as the mean squared error between the predicted and observed infection data for each country. 
By following Algorithm \ref{alg:AlgorithmParametric}, which optimizes the regularized losses $\ell_{\lambda_k}$, we obtain specific parameters $\thetacl = (\beta_t, \gamma_t)$ for each country, together with a  centroid $\thetage$.
\begin{figure}
    \centering
	\includegraphics[scale=1]{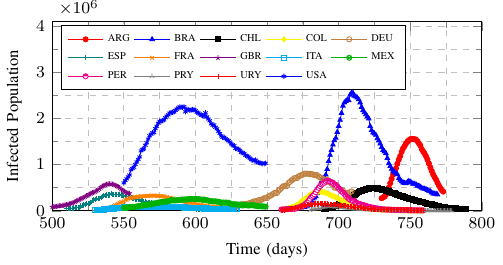}        
    \caption{Infected population over time for $T=14$ different countries in the OWID dataset. Countries exhibit different dynamics for the evolution of their infected populations.}
    \label{fig:covid-waves}
\end{figure}
The central question we investigate is: Would it have been possible to predict the peak and timing of Argentina's (ARG) COVID-19 wave using only early data from their own incipient records, supplemented with data from previous waves in other countries? To answer this, we use the epidemic waves from $T-1=13$ countries, shown in Figure 4. Our goal is to predict the latter part of the ARG wave using only its initial data points. We evaluate performance using two metrics: the lag error, which measures the difference in days between the predicted and actual peak of the infection wave, and the peak error, which measures the difference in the number of infected people at the peak.
Figure \ref{fig:10points} illustrates the prediction results when using only the first 10 days of ARG wave data and all previously available data from Fig. \ref{fig:covid-waves}. The separate estimator, which uses only ARG data, fails to capture the peak. With a fitted recovery rate $\gamma = 0$, it incorrectly predicts unbounded exponential growth. On the other hand, enforcing consensus, yields a single common pair of $(\beta,\gamma)$ parameters for all $T=14$ countries, averaging out the unique dynamics of the ARG wave, resulting in a prediction that severely underestimates the severity of the outbreak. In contrast, the cross-learning approach (with $\epsilon=0.1$) effectively leverages the data from all 15 countries. It produces a model that accurately predicts both the timing (lag) and the magnitude (peak) of the ARG infection wave.

 Under cross-learning, the distance of the final parameters in parameter space depends on the decision of the value $\epsilon$. Particularly $\epsilon=\infty$ forces no closeness between parameters, being equivalent to a separate approach, while $\epsilon=0$ forces all parameters to be equal, resulting in strict consensus. Figure \ref{fig:parameterspace} compares the separate SIR estimators to the cross-learning ones for $\epsilon=0.1$. A separate estimator for ARG yields $\gamma=0$, which is expected since the parameter $\gamma$, specifically, is tied to the  downward slope, and there are not enough points in the ARG dataset to fit it properly. By contrast, the parameters $\thetacl=(\beta_t,\gamma_t)$ are closer to the centroid  $\thetage$. Thus, the cross-learning estimator forces both ARG parameters to be nonzero. 
 \begin{figure}
    \centering
	\includegraphics[scale=1]{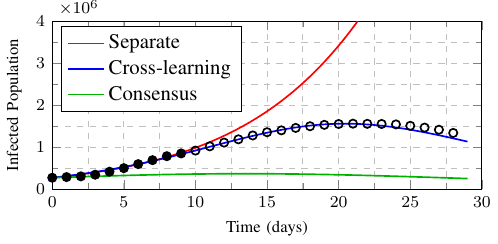}        
    \caption{SIR model predictions for ARG using the separate $(\beta=0.1481,\gamma=0.0)$, cross-learning $(\beta=0.6608,\gamma=0.4388)$, and consensus $(\beta=0.3497,\gamma=0.2802)$ estimators. Filled and hollow black dots represent the training and test datasets, respectively. Cross-learning achieves a more accurate prediction of the peak of infections with an error of $0.07\%$ in the number of cases and finding the exact day when the peak occurs, as compared to errors of $2474\%$ and $76.38\%$ and time lags of $108$ and $8$ days for the separate and consensus estimators, respectively.}
    \label{fig:10points}
\end{figure}
 Close inspection of the SIR model \eqref{eq:sir_model} reveals that the peak of infections occurs when $\beta_t S_t/N_t=\gamma_t$, since that implies $dI_t/d\tau=0$. Furthermore, $\beta_t$ and $\gamma_t$ must be similar to each other if the peak occurs at an early stage of the epidemic, in which $S_t/ N_t\simeq 1$. As Figure  \ref{fig:parameterspace} shows, this similarity is captured by the cross-learning estimator and is a key enabler for predicting the peak in Figure \ref{fig:10points}, where the separate estimator fails. Notice also that, while the centralized estimator also forces $\beta_c\simeq \gamma_c$, it reduces the influence of ARG data on the estimation. This causes distortion on the parameter $\beta$, which models the initial upright slope. As seen in Figure \ref{fig:10points}, such a distortion results in a worse prediction of the peak compared the one obtained by our cross-learning approach.  Remarkably, cross-learning  does not require specific knowledge of these model insights but captures them by just setting a rather simple  similarity constraint for the parameters across tasks.   
Furthermore, we performed an ablation study whose results are shown in   Figure \ref{fig:ablation-epsilon}. Choosing $\epsilon=0.1$ is optimal, in the sense that it yields both the minimum peak and lag errors. Still, there is a wide range of $\epsilon$ values yielding errors that are orders of magnitude lower than those corresponding to  the separate or consensus estimators, indicating that the approach is robust and does not require extensive hyperparameter tuning to provide significant benefits.
\begin{figure}
    \centering
	\includegraphics[scale=1]{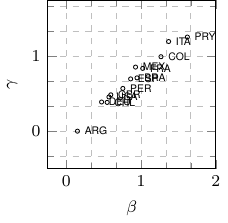}    
	\includegraphics[scale=1]{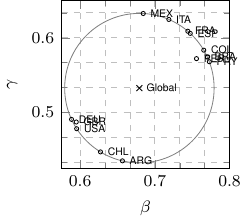}        
    \caption{Learned SIR parameters $(\beta,\gamma)$ per country by separate estimators (left) and cross-learning  (right).}
    \label{fig:parameterspace}
\end{figure}
\begin{figure}
    \centering
	\includegraphics[scale=1]{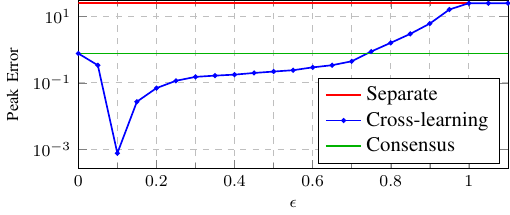}    
	\includegraphics[scale=1]{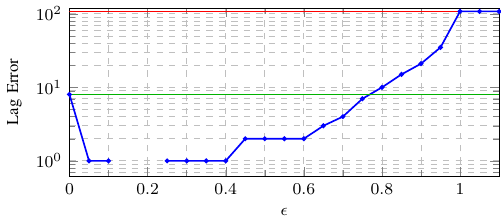}                
    \caption{Study on the effect of the centrality parameter $\epsilon$ on the peak error (top) and lag error (bottom) in logarithmic scale. The cross-learning estimator coincides with the consensus one at $\epsilon=0$, reduces to a minimum error (with no lag for $\epsilon \in \{0.15,0.20\}$.) and then reaches its assymptotic value, the error corresponding to the separate estimator at $\epsilon= 1.0$.}
    \label{fig:ablation-epsilon}
\end{figure}

Finally, we analyze the prediction error as a function of the number of available data points for ARG. In Figure 8, we show the error using separate, consensus, and cross-learning approaches. The independent approach requires a substantial amount of data (nearly 20 days) to correctly identify the trend and make an accurate prediction. In contrast, our cross-learning method achieves a low error with as few as $10$ days of data. The performance of the consensus approach remains poor regardless of the amount of ARG data, as this specific information is diluted within the global dataset.
\begin{figure}
    \centering
	\includegraphics[scale=1]{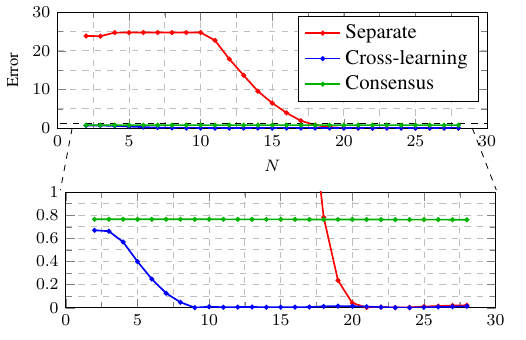}                    
   \vspace{-3ex}
    \caption{Prediction error as a function of the number of available ARG data points ($N$). The bottom panel provides a zoomed-in view of the low-error region.}
    \label{fig:peaksvsn}
\end{figure}

\subsection{Office-Home Dataset Classification}\label{ssec:image_Classification}
In this subsection, we benchmark our cross-learning method with coupled outputs on an image classification problem with real data coming from the dataset that we introduced in Figure \ref{fig:Dataset}. We consider the problem of classifying images belonging to $P=65$ different categories (which in Figure \ref{fig:Dataset} are Alarm, Bike, Glasses, Pen, and Speaker), and $T=4$  different domains. The Office-Home dataset \cite{venkateswara2017deep} consists of $15{,}500$ RGB images in total coming from the domains (i.e. tasks) (i) Art: an artistic representation of the object, (ii) Clipart: a clip art reproduction, (iii) Product: an image of a product for sale, and (iv) Real World: pictures of the object captured with a camera. Intuitively, by looking at the images, we can conclude that the domains are related. The minimum number of images per domain and category is $15$ and the image size varies from the smallest image size of $18 \times 18$ to the largest being $6500 \times 4900$ pixels. We pre-processed the images by normalizing them and fitting their size to $224 \times 224$ pixels.

 We  classify these images with neural networks $f(x,\theta_t)$,  with their architecture based on AlexNet \cite{NIPS2012_c399862d}, reducing the size of the last fully connected layer to $256$ neurons. We do not  pre-train these  networks, but optimize their weights from data via cross-learning using the cross-entropy loss \cite{1Hastie2009}, after splitting the dataset in  $4/5$ of the images for training and $1/5$ for testing.    
Since neural networks are involved, we opt for the version of cross-learning with coupled outputs. Specifically, we use Algorithm \ref{alg:Algorithm} to train $T+1=5$ neural networks for the $T=4$ domains plus the centroid, with step-sizes $\eta_P=0.003$ and $\eta_D=10$, respectively, for different values of $\epsilon$. 
We also used the same split dataset to  test the consensus classifier, which is equivalent to merging the images from all domains and training a single neural network on the whole dataset, and the $T=4$  classifiers which train their  neural networks separately from images of their own specific domains. For comparison purposes, we also run Algorithm \ref{alg:AlgorithmParametric} on this dataset, which implements cross-learning with  coupled parameters.

\begin{figure}[t]
\captionsetup[subfigure]{labelformat=empty}
\centering
\begin{subfigure}[b]{.04\columnwidth}
\rotatebox{90}{\quad \small{Art}}
\end{subfigure}
\begin{subfigure}[b]{.13\columnwidth}
\includegraphics[width=1.2cm,height=1.2cm,keepaspectratio]{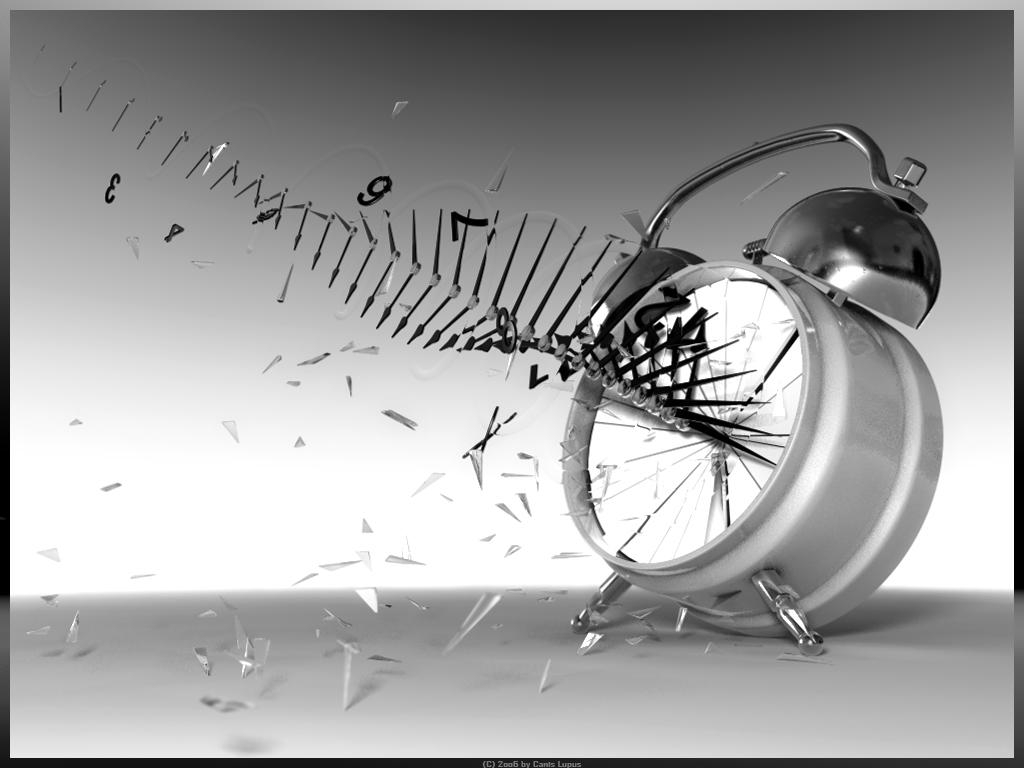}   
\end{subfigure}
\begin{subfigure}[b]{.13\columnwidth}
\includegraphics[width=1.2cm,height=1.2cm,keepaspectratio]{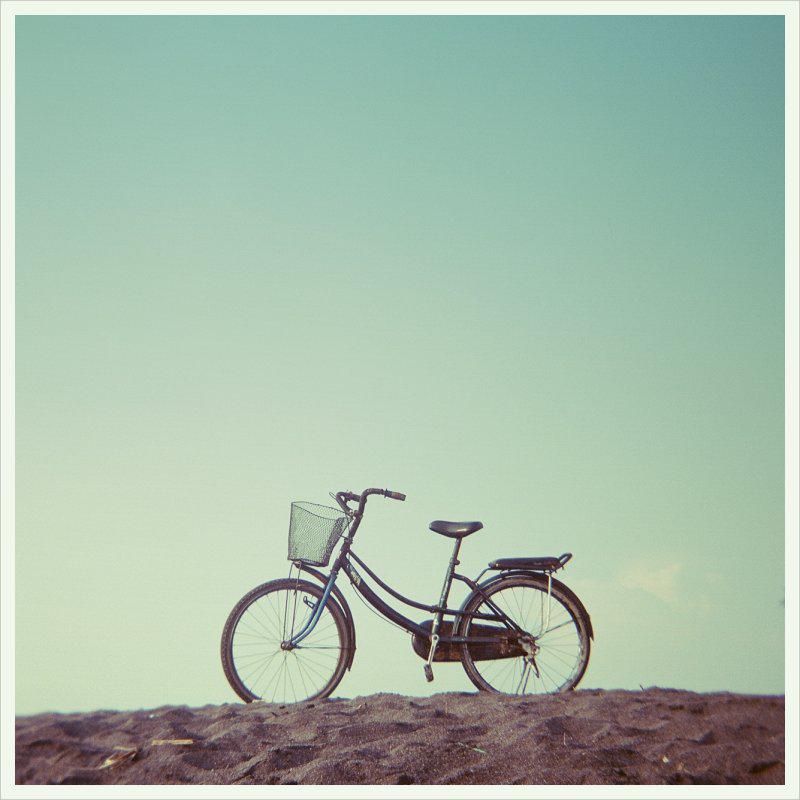}      
\end{subfigure}
\begin{subfigure}[b]{.13\columnwidth}
\includegraphics[width=1.2cm,height=1.2cm,keepaspectratio]{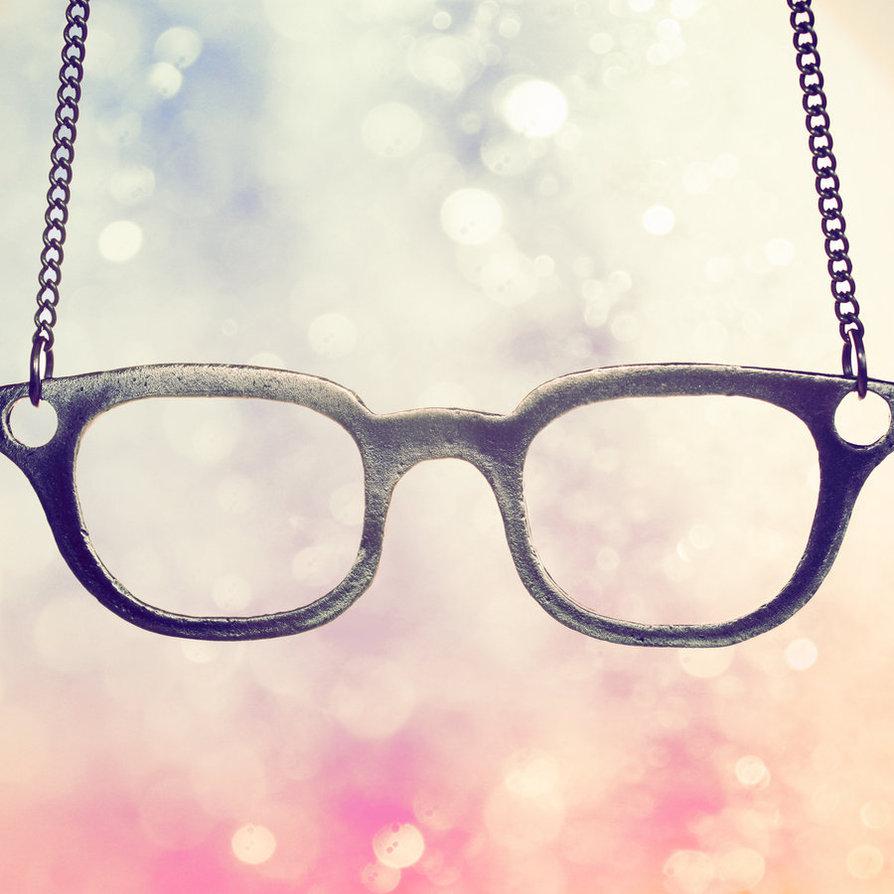}    
\end{subfigure}
\begin{subfigure}[b]{.13\columnwidth}
\includegraphics[width=1.2cm,height=1.2cm,keepaspectratio]{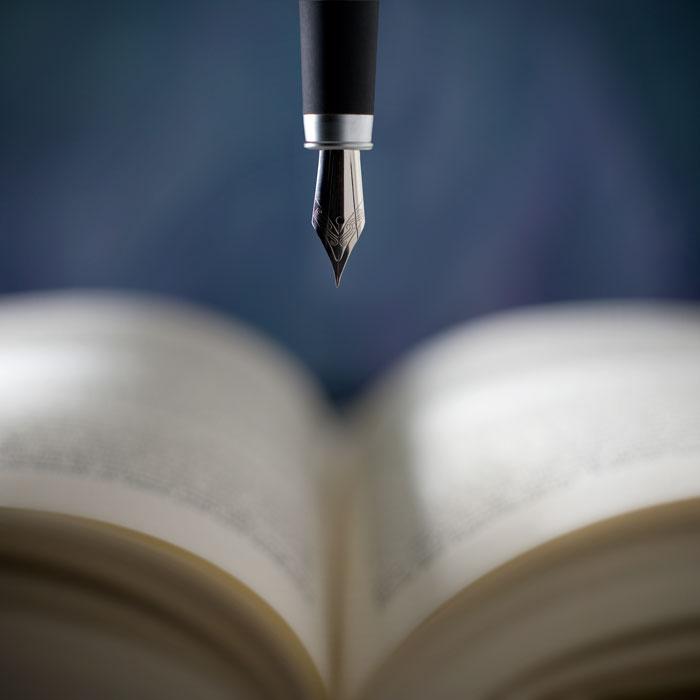}      
\end{subfigure}
\begin{subfigure}[b]{.13\columnwidth}
\includegraphics[width=1.2cm,height=1.2cm,keepaspectratio]{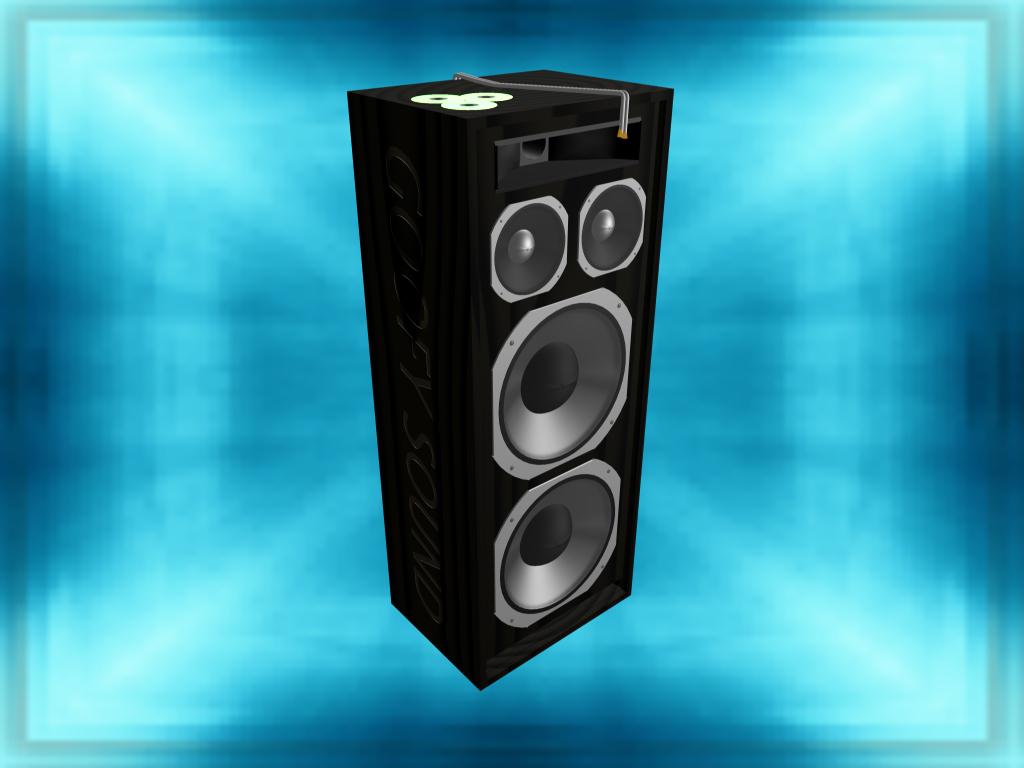}      
\end{subfigure}
\\
\begin{subfigure}[b]{.04\columnwidth}
\rotatebox{90}{\enskip \small{Clipart}}
\end{subfigure}
\begin{subfigure}[b]{.13\columnwidth}
\includegraphics[width=1.2cm,height=1.2cm,keepaspectratio]{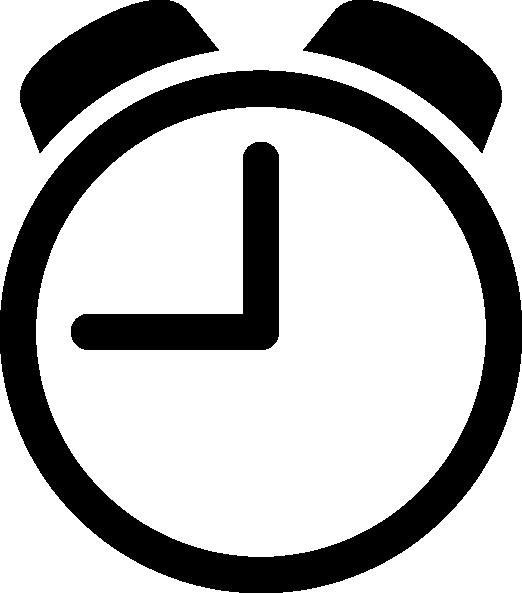}   
\end{subfigure}
\begin{subfigure}[b]{.13\columnwidth}
\includegraphics[width=1.2cm,height=1.2cm,keepaspectratio]{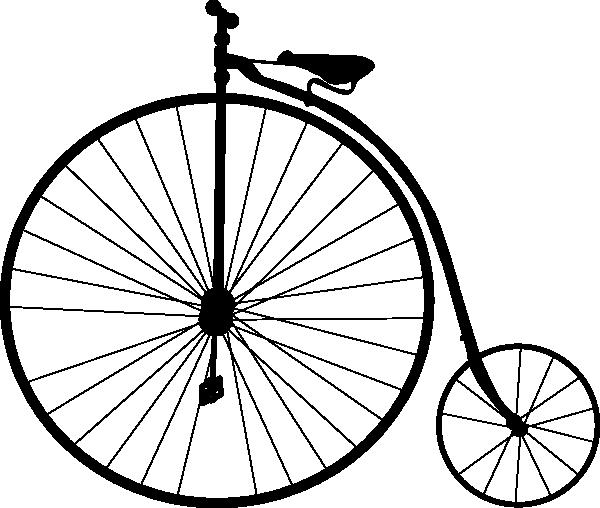}      
\end{subfigure}
\begin{subfigure}[b]{.13\columnwidth}
\includegraphics[width=1.2cm,height=1.2cm,keepaspectratio]{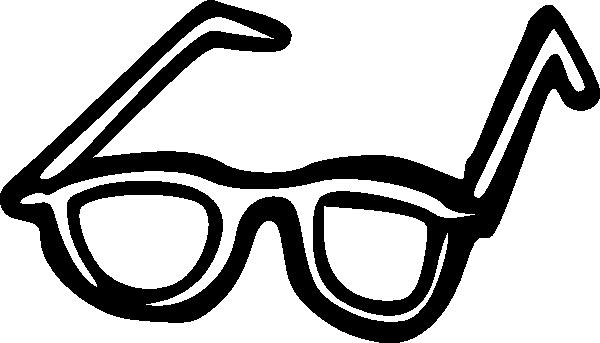}    
\end{subfigure}
\begin{subfigure}[b]{.13\columnwidth}
\includegraphics[width=1.2cm,height=1.2cm,keepaspectratio]{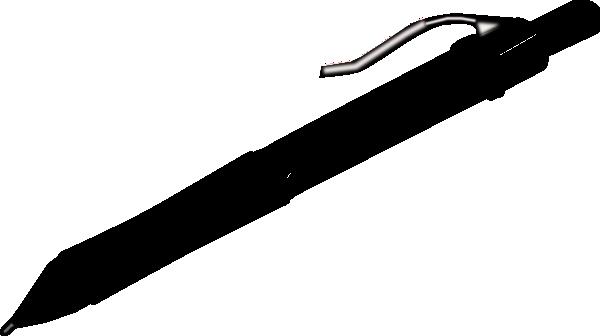}      
\end{subfigure}
\begin{subfigure}[b]{.13\columnwidth}
\includegraphics[width=1.2cm,height=1.2cm,keepaspectratio]{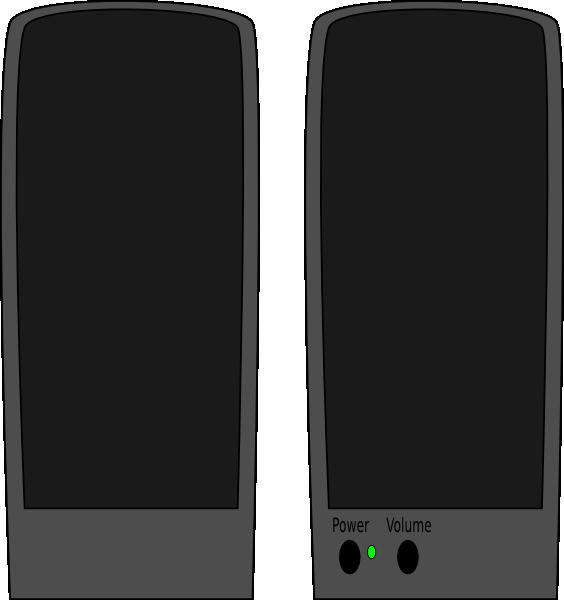}      
\end{subfigure}
\\
\begin{subfigure}[b]{.04\columnwidth}
\rotatebox{90}{\enskip \small{Product}}
\end{subfigure}
\begin{subfigure}[b]{.13\columnwidth}
\includegraphics[width=1.2cm,height=1.2cm,keepaspectratio]{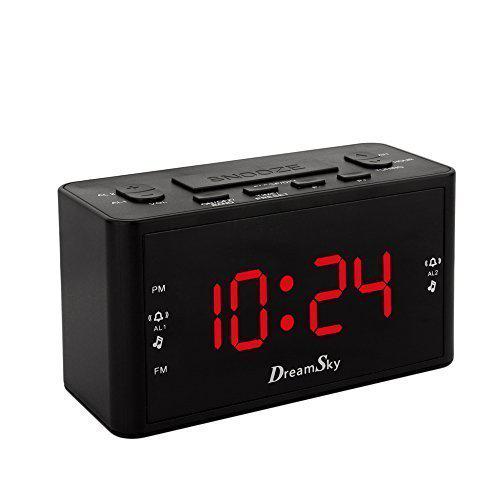}   
\end{subfigure}
\begin{subfigure}[b]{.13\columnwidth}
\includegraphics[width=1.2cm,height=1.2cm,keepaspectratio]{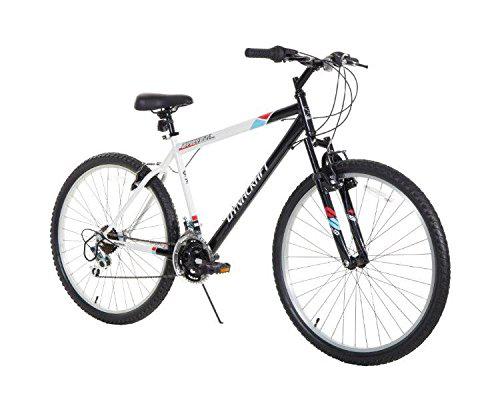}      
\end{subfigure}
\begin{subfigure}[b]{.13\columnwidth}
\includegraphics[width=1.2cm,height=1.2cm,keepaspectratio]{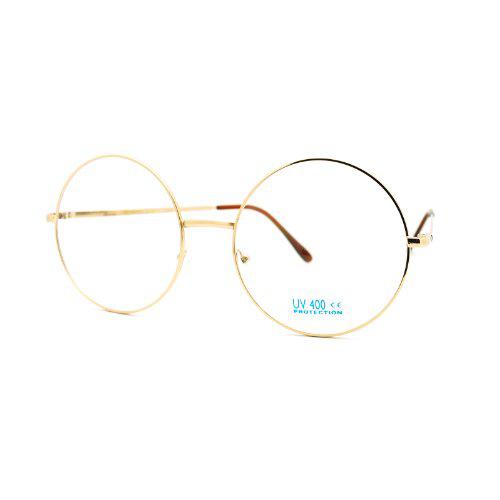}    
\end{subfigure}
\begin{subfigure}[b]{.13\columnwidth}
\includegraphics[width=1.2cm,height=1.2cm,keepaspectratio]{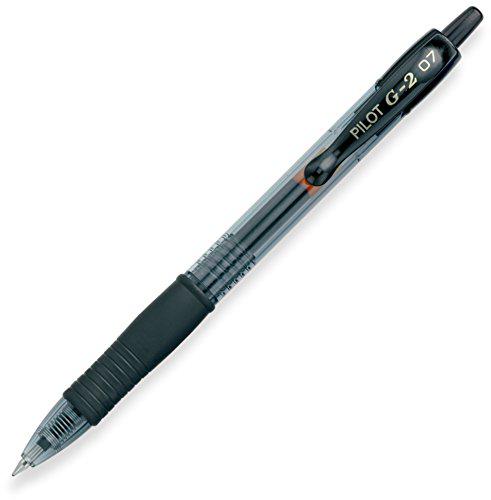}      
\end{subfigure}
\begin{subfigure}[b]{.13\columnwidth}
\includegraphics[width=1.2cm,height=1.2cm,keepaspectratio]{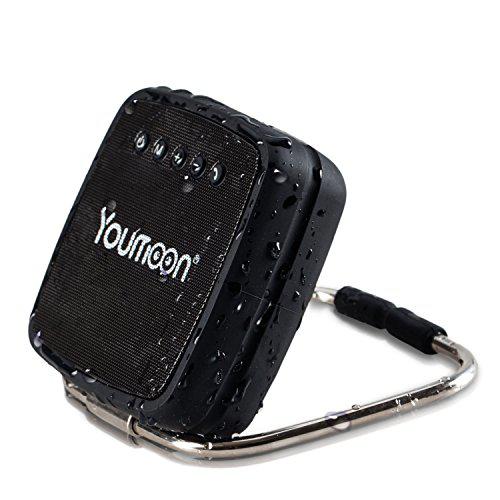}      
\end{subfigure}
\\
\begin{subfigure}[b]{.04\columnwidth}
\rotatebox{90}{\quad\quad \small{World}}
\end{subfigure}
\begin{subfigure}[b]{.13\columnwidth}
\includegraphics[width=1.2cm,height=1.2cm,keepaspectratio]{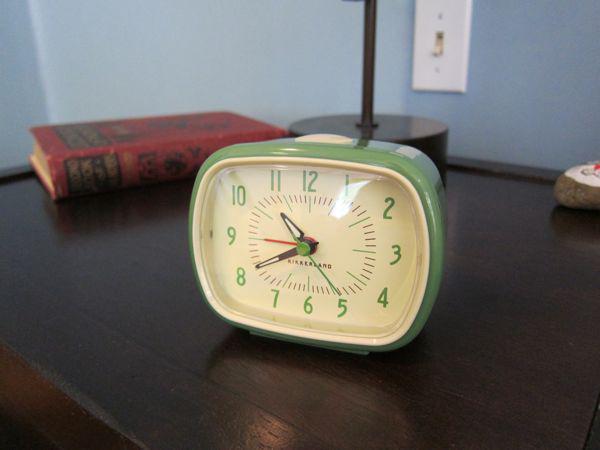}   
\caption{Alarm}
\end{subfigure}
\begin{subfigure}[b]{.13\columnwidth}
\includegraphics[width=1.2cm,height=1.2cm,keepaspectratio]{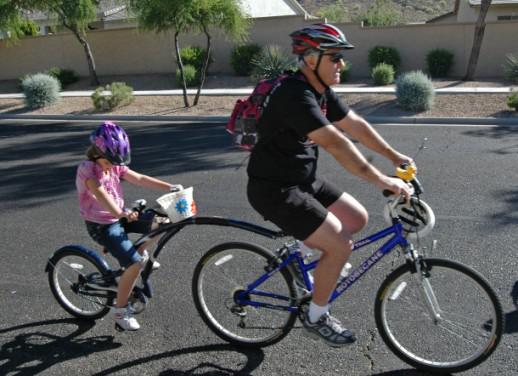}      
\caption{Bike}
\end{subfigure}
\begin{subfigure}[b]{.13\columnwidth}
\includegraphics[width=1.2cm,height=1.2cm,keepaspectratio]{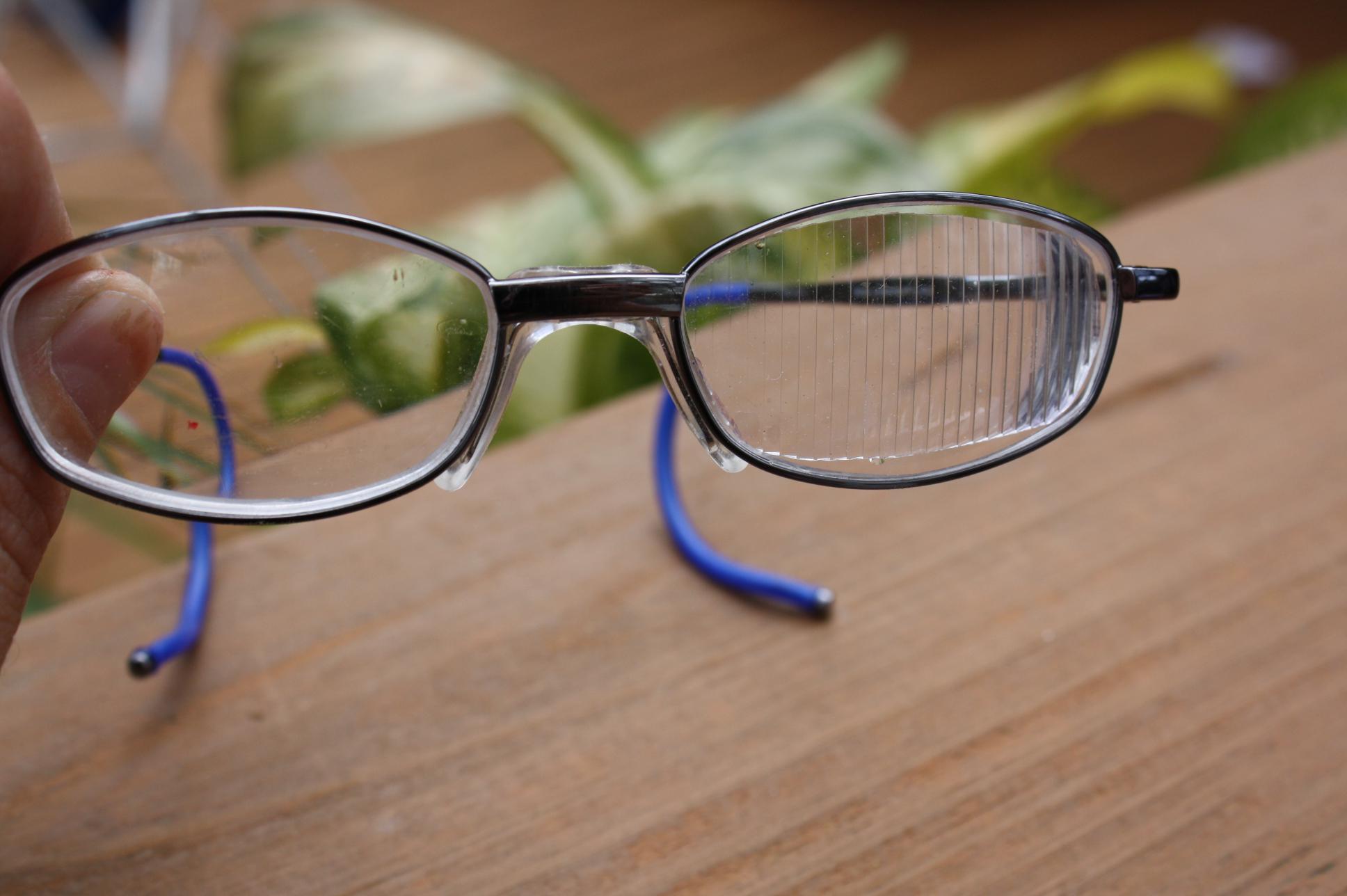}    
\caption{Glasses}
\end{subfigure}
\begin{subfigure}[b]{.13\columnwidth}
\includegraphics[width=1.2cm,height=1.2cm,keepaspectratio]{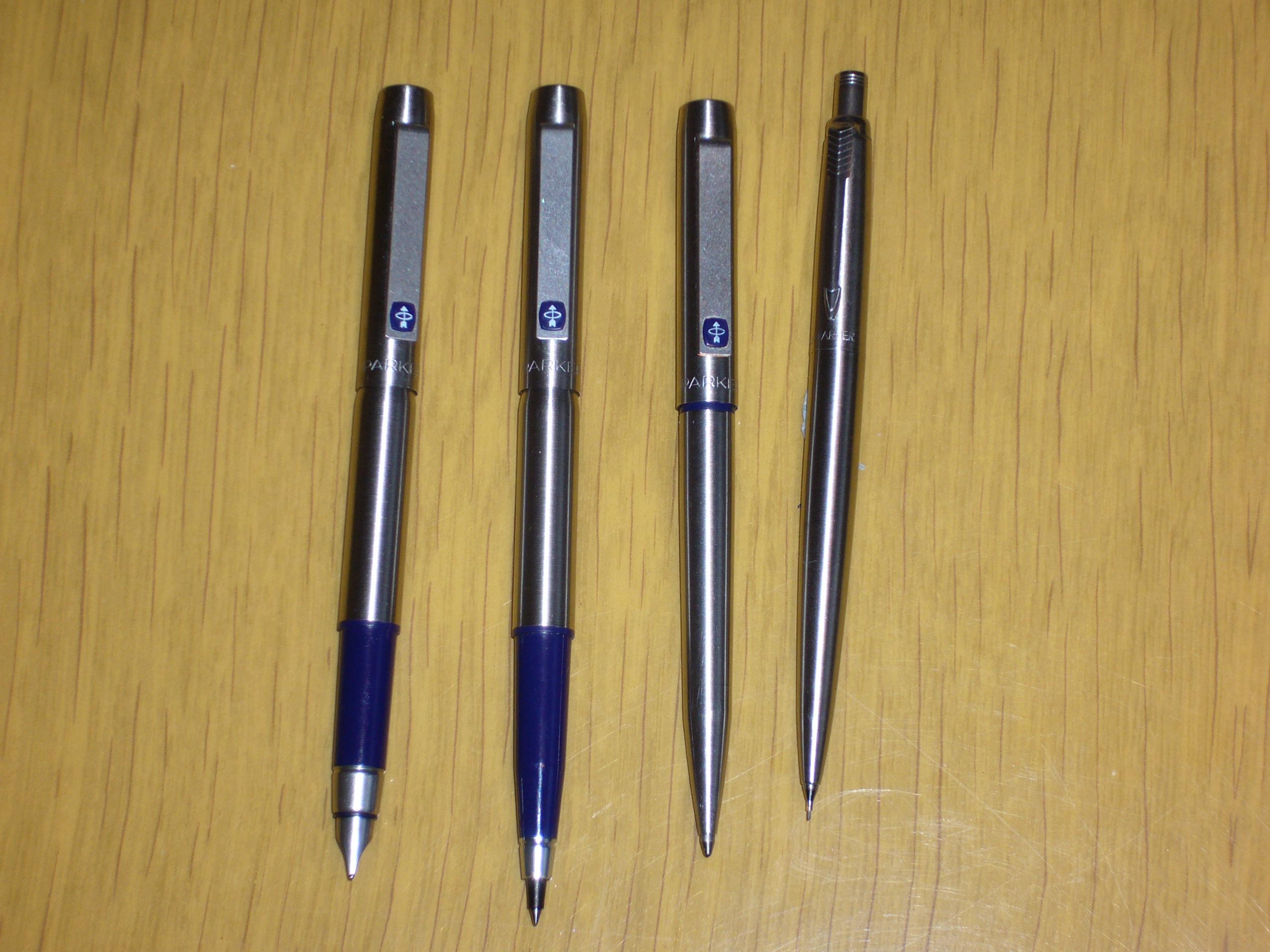}      
\caption{Pen}
\end{subfigure}
\begin{subfigure}[b]{.13\columnwidth}
\includegraphics[width=1.2cm,height=1.2cm,keepaspectratio]{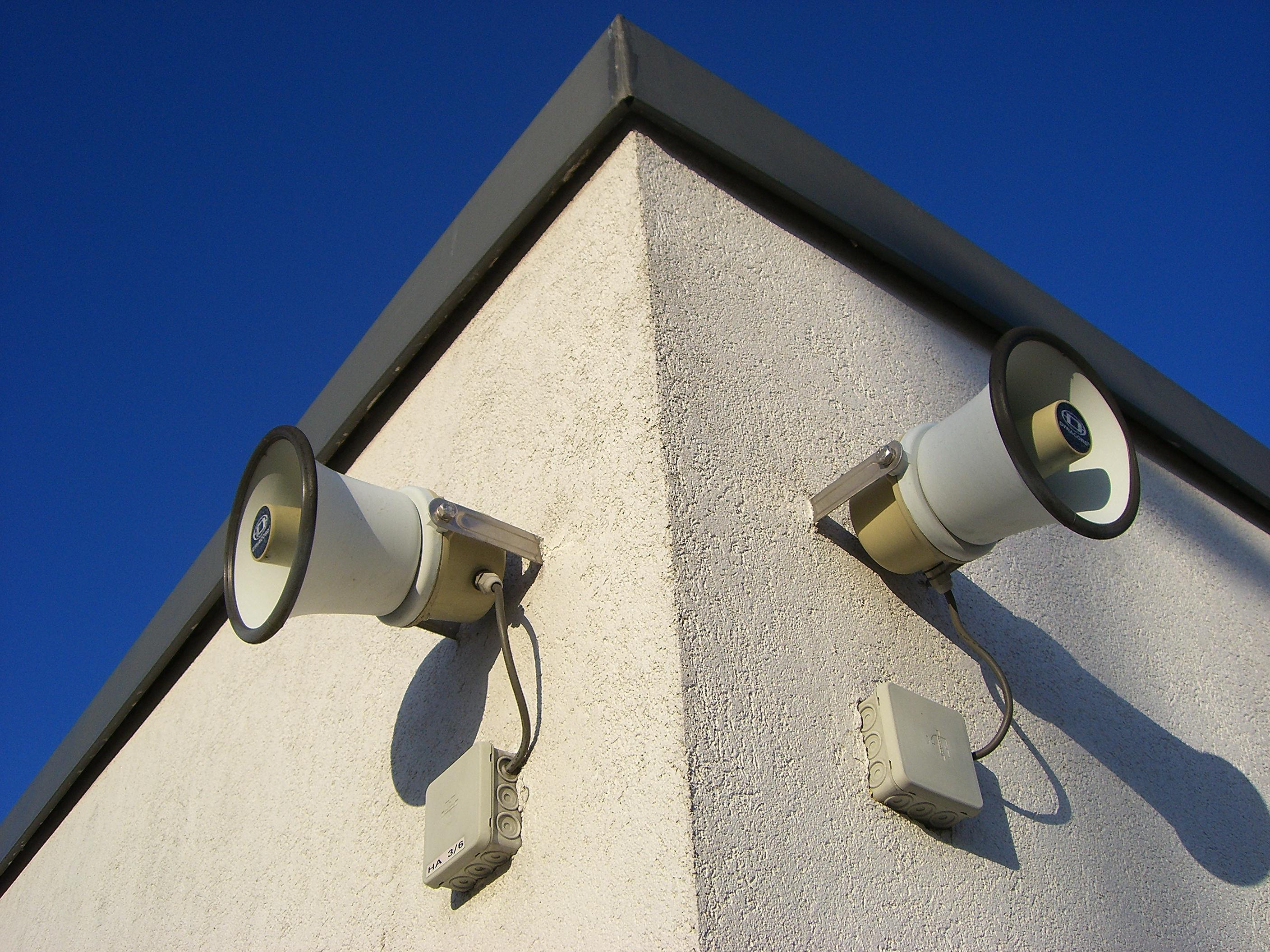}
\caption{Speaker}
\end{subfigure}
\caption{Example images from $5$ of the $65$ categories from the $4$ domains composing the Office-Home dataset \cite{venkateswara2017deep}. The $4$ domains are Art, Clipart, Product and Real World. In total, the dataset contains $15{,}500$ images of different sizes.}
\label{fig:Dataset}
\end{figure}

We show our results in Figure \ref{fig:accuracy}, where we use the classification accuracy of the trained networks on the test set as figure of merit.  
To begin with, we empirically corroborate that the dataset  share mutual information across domains, since the consensus  classifier achieves an accuracy of $35.59\%$,  outperforming the separate training whose accuracy is $
31.94\%$. This means that utilizing samples from other domains improves the performance of the learned classifier. This is the setting in which cross-learning is most helpful as it will help reduce bias. 
As it can be seen in Figure \ref{fig:accuracy}, a salient fact about the experiments is that the cross-learning estimator in both the parameter, as well as the output constraint, consistently outperforms both consensus and separate estimators, reaching a maximum accuracy of $44.5\%$  at $\epsilon=0.2$ when coupling the outputs. Indeed, in this scenario, the case with output constraints shows an improvement over parameter constraints.
However, in both cases the estimator has a positive slope coming from $\epsilon=0$, and a negative one when $\epsilon$ is big enough, attaining a maximum inbetween, which reproduces the theoretical findings of Section \ref{sec:perf_analysis}.

\begin{figure}[t]
\centering
\includegraphics[scale=1]{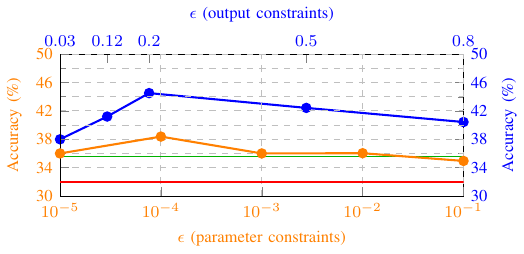}                
\vspace{-3ex}
\caption{Overall accuracy in percentage of correctly classified images measured on unseen data (test set). The consensus estimator (green) coincides with the parametric one (orange) when $\epsilon=0$ and the tends to the separate one (red) as $\epsilon\to\infty$. The output-constrained cross-learning estimator (blue) obtains the highest accuracy of 44.5\%. at $\epsilon=0.2$.
}
\label{fig:accuracy}
\end{figure}

As a byproduct, Algorithm \ref{alg:Algorithm} generates dual variables $\lambda_t$ that contain valuable information of the problem at hand.  Figure \ref{fig:lambdas} shows the  dual variables $\lambda$ as a function of the epoch. A larger dual variable indicates that the constraint is harder to satisfy, and a dual-variable equal to $0$ indicates that the constraint is inactive. As seen in \ref{fig:lambdas}, all $\lambda_t$ are non-negative, which means that we are effectively in a regime where there the classifiers are utilizing data from other tasks. 
If we look at the relative values of the dual variables, we see that the domain Art has the smallest value, whereas the domain Real World has the largest one. Given that Art has the least amount of samples, it is the domain that mostly benefits from images coming from other tasks. On the other hand, the largest dual variable is associated with the real-world dataset, have images with more details, textures, and shapes, and are therefore more difficult to classify (cf. Figure \ref{fig:Dataset}). 

\begin{figure}[t]
\centering
\includegraphics[scale=1]{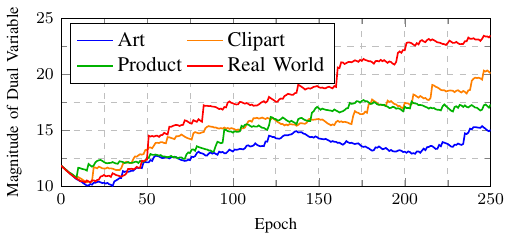}                
\caption{Dual variable associated with each constraint for the case of $\epsilon=0.12$. Intuitively, a larger dual variable indicates that the constraint is harder to satisfy.}
\label{fig:lambdas}
\end{figure}

\section{Conclusions}
\label{sec:Conclusions}
We proposed a constrained optimization method to achieve multi-task learning under a deterministic paradigm for the model parameters. Leveraging a controlled Gaussian model, we were able to prove that by forcing the parameter estimators to be close to one another, we obtain a strictly lower mean-squared error than if we use separate or consensus counterparts. Then we tested our cross-learning method on a practical problem with real data, where we fit parameters modeling the propagation of an infectious disease across different populations using a tailored algorithm based on proximal operators. The results obtained in this scenario replicated the theoretical ones, with cross-learning predicting the peak of infections with an error of $0.07\%$, compared to errors of $76.38\%$ when assuming that all populations follow the same propagation model, and $2474\%$ when estimating separate parameters. In the context of classification with neural networks, we proposed a variant of our cross-learning method that constrains the model outputs, rather than their parameters, to be at a close distance. Such a variant was also tested on real data, comprising images from different categories and domains. In this new test, the cross-learning estimator achieved a higher accuracy of $44.5\%$ when compared to $33.97\%$ and $24.44\%$ for the consensus and separate counterparts, respectively.

\appendices
\appendix

\subsection{Bias-Variance}\label{app:bias-variance}
Let us compute the bias and variance of the consensus estimator $\thetac=(1/T)\sum_{t=1}^T \thetahat$ as an estimator for the ground truth parameter $\thetastar$, under the model $\thetahat=\thetastar+\eta_t$  with zero-mean noise $\eta_t$ of variance $\sigma^2$.
The bias depends on the pairwise distances between the ground truth estimators, i.e.,  
\begin{align}
\mathbb{E}\left[\thetac-\thetastar\right]&=\mathbb{E}\left[\frac{1}{T}\sum_{\tau=1}^\top \left(\hat\theta_\tau-  \thetastar\right)\right]=\frac{1}{T}\sum_{\tau=1}^\top \left(\hat\theta_\tau-  \thetastar\right)\nonumber
\end{align}
so it introduces bias unless all $\thetastar$ coincide. On the other hand
\begin{align*}
\text{var}(\thetac)&=\mathbb{E}\left[\left\|\thetac-\mathbb{E}\left[\thetac\right]\right\|^2\right] =\mathbb{E}\left[\left\|\frac{1}{T}\sum_{t=1}^T\left( \thetahat-  \thetastar\right)\right\|^2\right]\\
&=\frac{1}{T}\sum_{t=1}^T \mathbb{E}\left[\left\|\left( \hat \theta_t-\theta^\star_t\right)\right\|^2\right]=
\frac{1}{T^2}\sum_{t=1}^{T}\frac{d}{N_t} \sigma^2.
\end{align*}
which is smaller, the larger the number of tasks $T$ is. 

\subsection{Auxiliary Proofs of Proposition \ref{prop:centralized_vs_cl}}\label{app:proofs}
We first establish that the cross-learning centroid is a convex combination of the separate estimates.
\begin{lemma}\label{lemma:thetaG}
Consider the centroid $\thetage$ solution to \eqref{eqn_perf_ana:cross_learning}, and 
the separate estimates $\thetahat=(1/N_t)\sum_{n=1}^{N_t}y_{nt}$. There exist a set of coefficients $\gamma_t\in[0,1]$ satisfying $\sum_{t=1}^T\gamma_t=1,$ such that
\begin{align}\thetage&=\sum_{t=1}^T \gamma_t \thetahat.\label{eq:convex_comb_lemma1}
\end{align}
\end{lemma}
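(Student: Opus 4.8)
\textbf{Proof plan for Lemma \ref{lemma:thetaG}.}

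The plan is to characterize the optimal centroid $\thetage$ via the first-order optimality conditions of the cross-learning problem \eqref{eqn_perf_ana:cross_learning}, and then show those conditions force $\thetage$ to be a convex combination of the $\thetahat$. First I would observe that, for fixed $\theta_g$, the inner minimization over each $\theta_t$ of $\|\thetahat-\theta_t\|^2$ subject to $\|\theta_t-\theta_g\|\le\epsilon$ is exactly the Euclidean projection of $\thetahat$ onto the ball $\mathcal B(\theta_g,\epsilon)$ (this is Lemma \ref{lemma:projection}, invoked later in the paper). Hence $\thetacl=\theta_g+\mathrm{clip}_\epsilon(\thetahat-\theta_g)$, where the clip map rescales a vector to have norm at most $\epsilon$. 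Concretely, $\thetacl-\theta_g = \kappa_t\,(\thetahat-\theta_g)$ with $\kappa_t = \min\{1,\,\epsilon/\|\thetahat-\theta_g\|\}\in(0,1]$.

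Next I would write the reduced objective as a function of $\theta_g$ alone, $\Phi(\theta_g)=\sum_t \|\thetahat-\thetacl(\theta_g)\|^2 = \sum_t (1-\kappa_t)^2\|\thetahat-\theta_g\|^2$, and use the fact that the global optimum $\thetage$ must be a stationary point. The cleanest route is to note that at the optimum the subgradient of $\Phi$ vanishes; since for each $t$ the contribution $\|\thetahat-\thetacl\|^2 = (\|\thetahat-\theta_g\|-\epsilon)_+^2$ is a differentiable function of $\theta_g$ with gradient $-2(\|\thetahat-\theta_g\|-\epsilon)_+\,(\thetahat-\theta_g)/\|\thetahat-\theta_g\|$ (and $0$ for the inactive-constraint tasks), the stationarity condition reads
\begin{align}
\sum_{t=1}^T \beta_t\,(\thetahat-\thetage)=0,
\end{align}
where $\beta_t := (\|\thetahat-\thetage\|-\epsilon)_+/\|\thetahat-\thetage\|\ge 0$ for tasks with active constraint and $\beta_t=0$ otherwise. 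If all $\beta_t=0$ the constraint is inactive for every task and $\thetage$ can be taken to be any $\thetahat$ (or their average), so \eqref{eq:convex_comb_lemma1} holds trivially; otherwise, rearranging gives $\big(\sum_t\beta_t\big)\,\thetage=\sum_t\beta_t\,\thetahat$, i.e. $\thetage=\sum_t\gamma_t\thetahat$ with $\gamma_t=\beta_t/\sum_s\beta_s\ge 0$ and $\sum_t\gamma_t=1$, which is exactly the claim.

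The main obstacle I anticipate is handling the non-smoothness and potential non-uniqueness carefully: the map $\theta_g\mapsto(\|\thetahat-\theta_g\|-\epsilon)_+$ is not differentiable on the sphere $\|\thetahat-\theta_g\|=\epsilon$, and the overall problem is not strictly convex in $\theta_g$ (the hinge-squared terms are only convex), so there may be a set of optimal centroids; one must argue that at least one optimal $\thetage$ admits the representation \eqref{eq:convex_comb_lemma1}, or phrase the stationarity condition in terms of subgradients and pick the appropriate element. A secondary subtlety is the degenerate case $\thetahat=\thetage$ for some $t$ (zero denominator), which is handled by noting such a task simply contributes $\beta_t=0$. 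Once these edge cases are dispatched, the convex-combination structure falls out directly from the vanishing-gradient identity.
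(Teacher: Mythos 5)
Your argument is correct and reaches the same weights as the paper, but by a genuinely different route. The paper works with the full Lagrangian of \eqref{eqn_perf_ana:cross_learning}: stationarity in $\theta_t$ gives $\thetacl=\epsilon_t\thetahat+(1-\epsilon_t)\thetage$ with $\epsilon_t=1/(1+\mu_t)$, stationarity in $\theta_g$ gives $\sum_t\mu_t(\thetage-\thetacl)=0$, and combining the two yields $\gamma_t\propto\mu_t/(1+\mu_t)$. You instead eliminate the $\theta_t$ first (via the projection characterization, i.e.\ Lemma \ref{lemma:projection}), reduce to the value function $\Phi(\theta_g)=\sum_t\bigl(\|\thetahat-\theta_g\|-\epsilon\bigr)_+^2$, and set its gradient to zero, obtaining $\gamma_t\propto\bigl(\|\thetahat-\thetage\|-\epsilon\bigr)_+/\|\thetahat-\thetage\|$; these weights coincide with the paper's since the KKT multiplier of an active constraint is exactly tied to that excess distance. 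What your route buys is that $\Phi$ is convex and in fact $C^1$ (the squared hinge is differentiable even on the sphere $\|\thetahat-\theta_g\|=\epsilon$, with vanishing gradient there), so plain first-order stationarity is necessary and sufficient and no multiplier bookkeeping is needed; the paper's route buys the explicit formula \eqref{eq:thetacl_lambda} for $\thetacl$, which is reused in Lemmas \ref{lemma:projection} and \ref{lemma:thetacthetag}. One caveat: in the fully inactive case ($\beta_t=0$ for all $t$) your specific fix is off as stated, since an arbitrary $\thetahat$ or the plain average of the $\thetahat$ need not be a minimizer of $\Phi$; the correct statement, which you do gesture at, is that the solution set then contains a point of the convex hull (e.g.\ project any optimal $\theta_g$ onto the hull and use non-expansiveness), so at least one optimal centroid satisfies \eqref{eq:convex_comb_lemma1}. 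Note the paper's own formula degenerates to $0/0$ in that same case, so this is a shared edge case rather than a flaw specific to your argument.
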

\begin{proof}
The Lagrangian of  problem \eqref{eqn_perf_ana:cross_learning} takes the form
 \begin{align*}
L\left(\theta_t, \theta_g, \mu_t\right)&=\sum_{t=1}^T\left\|\thetahat-\theta_t\right\|^2+\sum_{t=1}^T\mu_t\left(\| \theta_t-\theta_g\|^2-\epsilon^2\right)
\end{align*}
Setting its gradient  with respect to $\theta_t$ to zero, we obtain
\begin{align}\label{eq:thetacl_lambda}
\thetacl&=\frac{\thetahat}{1+\mu_t}+\frac{\thetage \mu_t}{1+\mu_t} =\epsilon_t \thetahat+(1-\epsilon_t) \thetage
\end{align}
after defining $\epsilon_t=1/(1+\mu_t)$ so that $(1-\epsilon_t)=\mu_t/(1+\mu_t)$. On the other hand, setting the gradient of the Lagrangian with respect to $\theta_g$ to zero results in
\begin{align}\label{eq:derivative_Lagrangian_thetag}
    \sum_{t=1}^T\left(\thetage-\thetacl\right) \mu_t&=0.
\end{align}
Hence, we can substitute \eqref{eq:thetacl_lambda} in \eqref{eq:derivative_Lagrangian_thetag} to obtain
\begin{align}
\sum_{t=1}^T \mu_t \thetage &= \sum_{t=1}^T \mu_t \thetacl=\sum_{t=1}^T \mu_t\left(\left(\epsilon_t\right) \thetahat+(1-\epsilon_t) \thetage\right)\\
&=\sum \mu_t\epsilon_t \thetahat+\sum \mu_t (1-\epsilon_t) \thetage,
\end{align}
which we can  solve for $\thetage$ as in 
\begin{align}\label{eq:mulambdathetag}
 \thetage &= \sum_{t=1}^T \frac{\mu_t\epsilon_t \hat{\theta}_t}{\sum_{i=1}^T \mu_i\epsilon_t}= \sum_{t=1}^T \frac{(1-\epsilon_t) \hat{\theta}_t}{\sum_{i=1}^T (1-\epsilon_t)} . \end{align}
where we used $\mu_t\epsilon_t=\mu_t/(1+\mu_t)=(1-\epsilon_t)$, so that \eqref{eq:mulambdathetag} takes the  form  \eqref{eq:convex_comb_lemma1} 
if we identify  $\gamma_t\triangleq (1-\epsilon_t)/\sum_{i=1}^T (1-\epsilon_i).$
\end{proof}

\begin{lemma}\label{lemma:projection}
Consider the centroid $\thetage$ solution to \eqref{eqn_perf_ana:cross_learning}, and 
the separate estimates $\thetahat=(1/N_t)\sum_{n=1}^{N_t}y_{tn}$.
The cross-learning estimate $\thetacl$ in \eqref{eqn_perf_ana:cross_learning} is the projection of $\thetahat$ to the ball $\mathcal B(\thetage,\epsilon)$ of center $ \thetage$ and radius $\epsilon.$
\end{lemma}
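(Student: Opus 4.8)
The plan is to exploit the separability of problem \eqref{eqn_perf_ana:cross_learning} once the optimal centroid has been fixed. First I would let $\thetage$ denote the optimal centroid returned by the joint minimization in \eqref{eqn_perf_ana:cross_learning}, and observe that for this fixed value the remaining optimization over $\{\theta_t\}_{t=1}^T$ decouples across tasks: both the objective $\sum_{t=1}^T \|\thetahat-\theta_t\|^2$ and the constraints $\|\theta_t-\thetage\|\leq\epsilon$ involve each $\theta_t$ separately, and the constraints couple $\theta_t$ only to $\theta_g$, never to one another. Hence the optimal $\thetacl$ solves, for each $t$ independently,
\begin{align*}
\thetacl=\argmin_{\theta_t\,:\,\|\theta_t-\thetage\|\leq\epsilon}\|\thetahat-\theta_t\|^2,
\end{align*}
which is precisely the definition of the Euclidean projection $P_{\mathcal B(\thetage,\epsilon)}(\thetahat)$ of $\thetahat$ onto the closed ball $\mathcal B(\thetage,\epsilon)$; uniqueness is the standard fact that projection onto a closed convex set is well defined.

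For concreteness I would also verify the claim directly from the stationarity conditions already derived in the proof of Lemma \ref{lemma:thetaG}. Equation \eqref{eq:thetacl_lambda} gives $\thetacl=\epsilon_t\thetahat+(1-\epsilon_t)\thetage$ with $\epsilon_t=1/(1+\mu_t)\in(0,1]$, so $\thetacl$ always lies on the segment joining $\thetage$ and $\thetahat$. Complementary slackness then splits into two cases: if $\mu_t=0$ the constraint is slack, $\epsilon_t=1$, and $\thetacl=\thetahat$, which is consistent with feasibility only when $\|\thetahat-\thetage\|\leq\epsilon$; if $\mu_t>0$ the constraint is active, $\|\thetacl-\thetage\|=\epsilon$, and $\thetacl$ is the unique point at distance $\epsilon$ from $\thetage$ along the ray toward $\thetahat$. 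In both cases this reproduces exactly the closed form of $P_{\mathcal B(\thetage,\epsilon)}(\thetahat)$.

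There is essentially no hard step here; the only point requiring a little care is to justify that fixing $\thetage$ at its jointly optimal value is legitimate, i.e., that the pair $(\{\thetacl\},\thetage)$ optimal for the joint problem is still optimal for the $\theta_t$-subproblem with $\theta_g$ held at $\thetage$ — immediate from the decoupled structure noted above. I would close by remarking that the non-expansiveness and idempotence of the projection operator invoked later in Proposition \ref{prop:independent_vs_cl_estimator} are standard properties of projection onto a closed convex set, so no further argument is needed.
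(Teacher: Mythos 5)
Your proposal is correct, and your primary argument takes a genuinely different (and more elementary) route than the paper. The paper proves the lemma by reusing the KKT machinery of Lemma \ref{lemma:thetaG}: it invokes the stationarity relation \eqref{eq:thetacl_lambda} to place $\thetacl$ on the segment between $\thetahat$ and $\thetage$, and then argues by complementary slackness that either $\mu_t=0$ (so $\thetacl=\thetahat$ lies inside the ball) or the constraint is active (so $\thetacl$ sits on the boundary along that segment), identifying the projection geometrically in both cases --- this is exactly your second, ``for concreteness'' verification. Your first argument instead uses partial minimization: with $\theta_g$ frozen at its jointly optimal value $\thetage$, problem \eqref{eqn_perf_ana:cross_learning} decouples across tasks and each $\theta_t$-subproblem is, by definition, the Euclidean projection of $\thetahat$ onto the closed convex ball $\mathcal B(\thetage,\epsilon)$; the only point needing care, which you correctly flag, is that joint optimality of $(\{\thetacl\},\thetage)$ implies optimality of $\{\thetacl\}$ for the fixed-$\thetage$ subproblem. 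This buys a shorter proof that does not require the Lagrangian computations or the sign analysis of the multipliers, whereas the paper's route has the side benefit that the explicit convex-combination form \eqref{eq:thetacl_lambda} is reused elsewhere (e.g., in Lemma \ref{lemma:thetacthetag}). One small imprecision in your KKT variant: $\mu_t=0$ does not by itself imply the constraint is slack (it may hold with equality), but in that degenerate case $\thetacl=\thetahat$ lies on the boundary of the ball and is still its own projection, so the conclusion is unaffected.
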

\begin{proof}
By construction $\thetacl\in  \mathcal B(\thetage,\epsilon)$. 
 Furthermore,   according to \eqref{eq:thetacl_lambda} in the proof of Lemma \ref{lemma:thetaG},  the cross-learning estimate $\thetacl$ lies in a convex combination of $\thetage$ and  $\thetahat$.
For those constraints that are inactive, complementary slackness requires $\mu_t=0$. Substituting $\mu_t=0$ in \eqref{eq:thetacl_lambda} we obtain $\thetacl=\thetahat$, which means that $\thetahat$ also lies inside the ball, and hence they are projections of each other. If the constraint is active, then $\|\thetacl-\thetage\|=\epsilon$, which means that $\thetacl$ lies on the intersection of the  border of $\mathcal B(\thetage,\epsilon)$ with the segment between $\thetahat$ and $\thetage$, and that is the projection of $\thetahat$ to the ball again.  
\end{proof}

In the next Lemma we show that the consensus estimator $\thetac$ is close to the  centroid $\thetage$, which will be needed down the road to assess the performance of cross-learning.

\begin{lemma}\label{lemma:thetacthetag}
The error between the consensus estimator $\thetac$ and the cross-learning centroid $\thetage$ is given by
\begin{align}\thetage-\thetac=\frac{1}{T}\sum_{t=1}^T \epsilon_t \left(\thetac  -   \thetahat\right) +\mathcal O( \epsilon^2)\label{eq:lemma3tgetcerror}
\end{align}
with   $\epsilon_t = \epsilon/\|\thetahat-\thetac\|+\mathcal O(\epsilon^2)  =\mathcal O(\epsilon)$ in the limit as 
$\epsilon \to 0$.
\end{lemma}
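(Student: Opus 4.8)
The goal is to quantify, to first order in $\epsilon$, the gap between the consensus estimator $\thetac=(1/T)\sum_t\thetahat$ and the cross-learning centroid $\thetage$ obtained from \eqref{eqn_perf_ana:cross_learning}. The natural starting point is the convex-combination formula from Lemma~\ref{lemma:thetaG}, namely $\thetage=\sum_{t=1}^T\gamma_t\thetahat$ with $\gamma_t=(1-\epsilon_t)/\sum_i(1-\epsilon_i)$, together with the stationarity relation \eqref{eq:thetacl_lambda}, $\thetacl=\epsilon_t\thetahat+(1-\epsilon_t)\thetage$, and the dual stationarity \eqref{eq:derivative_Lagrangian_thetag}. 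The plan is to show that as $\epsilon\to 0$ the multipliers $\mu_t\to\infty$ (equivalently $\epsilon_t\to 0$), perform a perturbation expansion in $\epsilon_t$, and collect the first-order term.

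\textbf{Key steps.} First I would establish the asymptotics of $\epsilon_t$. At $\epsilon=0$ the constraint forces $\thetacl=\thetage$ for all $t$, and since at $\epsilon=0$ the problem is exactly the consensus problem we have $\thetage\big|_{\epsilon=0}=\thetac$. For small $\epsilon>0$, the active constraint $\|\thetacl-\thetage\|=\epsilon$ combined with \eqref{eq:thetacl_lambda}, which gives $\thetacl-\thetage=\epsilon_t(\thetahat-\thetage)$, yields $\epsilon_t\|\thetahat-\thetage\|=\epsilon$, hence $\epsilon_t=\epsilon/\|\thetahat-\thetage\|$. Since $\thetage=\thetac+\mathcal O(\epsilon)$, this is $\epsilon_t=\epsilon/\|\thetahat-\thetac\|+\mathcal O(\epsilon^2)=\mathcal O(\epsilon)$, which is exactly the claimed order. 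Second, I would expand the centroid: write $\thetage-\thetac=\sum_t\gamma_t\thetahat-\frac1T\sum_t\thetahat=\sum_t(\gamma_t-\tfrac1T)\thetahat$. Expanding $\gamma_t=(1-\epsilon_t)/\sum_i(1-\epsilon_i)$ for small $\epsilon_i$ gives $\gamma_t=\tfrac1T-\tfrac1T\epsilon_t+\tfrac1T\cdot\tfrac1T\sum_i\epsilon_i+\mathcal O(\epsilon^2)=\tfrac1T+\tfrac1T\big(\bar\epsilon-\epsilon_t\big)+\mathcal O(\epsilon^2)$ where $\bar\epsilon=\tfrac1T\sum_i\epsilon_i$. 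Therefore
\begin{align*}
\thetage-\thetac=\frac1T\sum_{t=1}^T(\bar\epsilon-\epsilon_t)\,\thetahat+\mathcal O(\epsilon^2)
=\frac1T\sum_{t=1}^T\epsilon_t\big(\thetac-\thetahat\big)+\mathcal O(\epsilon^2),
\end{align*}
where the last equality uses $\sum_t\bar\epsilon\,\thetahat-\sum_t\epsilon_t\thetahat=\sum_t(\bar\epsilon-\epsilon_t)\thetahat$ and the identity $\tfrac1T\sum_t\bar\epsilon\,\thetahat=\bar\epsilon\,\thetac=\tfrac1T\sum_t\epsilon_t\,\thetac$ (swapping the average onto the constant $\thetac$), giving precisely \eqref{eq:lemma3tgetcerror}. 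Substituting $\epsilon_t=\epsilon/\|\thetahat-\thetac\|+\mathcal O(\epsilon^2)$ then confirms both the formula and the stated order of the coefficients.

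\textbf{Main obstacle.} The delicate point is justifying that the first-order expansion is legitimate, i.e.\ that $\epsilon\mapsto\thetage(\epsilon)$ is differentiable at $0^+$ with the claimed derivative, rather than merely that a formal Taylor series exists. This requires checking that the constraints are all active for small $\epsilon>0$ (so that the relation $\epsilon_t\|\thetahat-\thetage\|=\epsilon$ genuinely holds for every $t$) and that the KKT system depends smoothly on $\epsilon$ near $\epsilon=0$; the implicit function theorem applies provided the relevant Jacobian is nonsingular, which uses the assumed distinctness of the ground-truth parameters so that $\thetahat\neq\thetac$ almost surely and no degeneracy occurs. Once that regularity is in hand, the remaining manipulations are the routine algebra sketched above, and the $\mathcal O(\epsilon^2)$ remainder is controlled uniformly over the (finitely many) tasks.
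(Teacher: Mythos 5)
Your proposal is correct and follows essentially the same route as the paper's proof: both start from Lemma~\ref{lemma:thetaG}, expand $\gamma_t=(1-\epsilon_t)/\sum_i(1-\epsilon_i)$ to first order to obtain $\thetage-\thetac=\frac{1}{T}\sum_{t=1}^T\epsilon_t(\thetac-\thetahat)+\mathcal O(\epsilon^2)$, and derive $\epsilon_t=\epsilon/\|\thetahat-\thetac\|+\mathcal O(\epsilon^2)$ from the active-constraint identity $\epsilon=\epsilon_t\|\thetahat-\thetage\|$ combined with $\thetage=\thetac+\mathcal O(\epsilon)$. Your added care about regularity (all constraints active for small $\epsilon$ and smooth dependence of the KKT system on $\epsilon$) addresses a point the paper treats only implicitly, but it does not alter the argument.
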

\begin{proof}
     We know from Lemma \ref{lemma:thetaG} that
$\thetage=\sum_{t=1}^T \gamma_t \thetahat$
with
\begin{align}\gamma_t&=  \frac{(1-\epsilon_t)}{\sum_{k=1}^T (1-\epsilon_k)}= \frac{1}{T}\frac{(1-\epsilon_t)}{ (1-\varepsilon)}\label{eq:gamma_epsilonbar}
\end{align}
where $\varepsilon:=(1/T)\sum_{k=1}^T \epsilon_k$. 
A closer look at \eqref{eq:gamma_epsilonbar} reveals that
\begin{align} (1-\varepsilon)(1+\varepsilon)&= 1-\varepsilon^2 \Rightarrow \frac{1}{ (1-\varepsilon)}=1+\varepsilon +\mathcal O(\varepsilon^2).
\label{eq:order_quotientepsilonbar}
\end{align}
 Combining \eqref{eq:order_quotientepsilonbar} with \eqref{eq:gamma_epsilonbar} we obtain
\begin{align}\gamma_t&= \frac{1}{T}(1-\epsilon_t)(1+\varepsilon+\mathcal O( \epsilon^2))=\frac{1}{T}\left(1+\varepsilon- \epsilon_t+\mathcal O( \epsilon^2)\right),\nonumber
\end{align}
which can be substituted in $\thetage=\sum_{t=1}^T \gamma_t \thetahat$, to obtain
\begin{align}&\thetage-\thetac=\sum_{t=1}^T \left(\gamma_t - \frac{1}{T}\right) \thetahat=\frac{1}{T}\sum_{t=1}^T \left(\varepsilon -\epsilon_t+\mathcal O( \epsilon^2)\right) \thetahat\nonumber\\
&=\varepsilon\thetac\hspace{-2pt}-\hspace{-1pt}\frac{1}{T}\hspace{-2pt}\sum_{t=1}^T   \hspace{-1pt}\epsilon_t\thetahat\hspace{-2pt}+\hspace{-2pt}\mathcal O( \epsilon^2)\hspace{-1pt}
=\hspace{-1pt}\frac{1}{T}\hspace{-1pt}\sum_{t=1}^T\hspace{-1pt}\epsilon_t (\thetac\hspace{-2pt}-\hspace{-2pt}\thetahat)\hspace{-2pt}+\hspace{-2pt}\mathcal O( \epsilon^2).\nonumber
\end{align}
To finish the proof, we need to show that $\epsilon_t$ is of order $\epsilon$ as $\epsilon\to 0$. In this regime, we can assume without loss of generality that the constraints are active, and write 
\begin{align}
\epsilon&\hspace{-2pt}=\hspace{-2pt}\|\thetacl-\thetage\|\hspace{-2pt}=\hspace{-2pt}\|\epsilon_t \thetahat+(1-\epsilon_t) \thetage-\thetage\|\hspace{-2pt}=\hspace{-2pt}\epsilon_t\| \thetahat-\thetage\|\label{eq:epsilontorder}.
\end{align}
which implies that $\epsilon_t=\mathcal O(\epsilon)$. Still,  we need one more step to prove $\epsilon_t=\epsilon/\|\thetahat-\thetac\|+\mathcal O(\epsilon^2)$. So we rewrite \eqref{eq:epsilontorder} as
\begin{align}
\epsilon_t &= \epsilon \|\thetahat-\thetage\|^{-1}=\epsilon\left(\|\thetahat-\thetac+\thetac-\thetage\|^2\right)^{-1/2}\nonumber\\
&=\epsilon \left(\|\thetahat-\thetac\|^2+\|\thetac-\thetage\|^2+2(\thetahat-\thetac)^\top(\thetac-\thetage) \right)^{-1/2}\nonumber\\
&=\epsilon\|\thetahat-\thetac\|^{-1}\left(1+2 R_t\right)^{-1/2}
\end{align}
with $R_t= \left(\|\thetac-\thetage\|^2+(\thetac-\thetage)^\top(\thetahat-\thetac) \right)\|\thetahat-\thetac\|^{-2}.$
Since we already proved that $\thetac-\thetage=\mathcal O(\epsilon)$, we can write
\begin{align}\epsilon_t=\frac{\epsilon(1+2\mathcal O(\epsilon))^{-1/2}}{\|\thetahat-\thetac\|}=\frac{\epsilon(1-\mathcal O(\epsilon))}{\|\thetahat-\thetac\|}=\frac{\epsilon}{\|\thetahat-\thetac\|} +\mathcal O(\epsilon^2),\nonumber
\end{align}
which is the result that we wanted to prove.
\end{proof}
The next lemma provides an expression for the difference between the cross-learning sample error $\mathcal E_{CL}$  and its consensus counterpart $\mathcal E_{C}$ in the regime  $\epsilon\to 0$. Remarkably, this expression does not depends on the solution $\thetacl$ but on simpler variables that  we can model in closed form.
\begin{lemma}\label{lemma:gradientMSE}
Consider the sample errors  $\mathcal{E}_{C}$ and $\mathcal{E}_{CL}(\epsilon)$, resulting from the consensus and cross-learning estimators,  as defined in  \eqref{eqn_mseC_def} and \eqref{eqn_mseCL_def}, respectively. As $\epsilon\to 0$, $\mathcal{E}_{CL}(\epsilon)$ converges to $\mathcal{E}_C$
according to 
\begin{align}
    \mathcal E_{CL}(\epsilon)\hspace{-2pt}-\hspace{-2pt}\mathcal E_{C}\hspace{-2pt}=\hspace{-2pt}-2\epsilon \avt (\thetastar-\theta_c^\star)^\top \frac{\thetahat-\thetac}{\|\thetahat-\thetac\|}\hspace{-2pt}+\hspace{-2pt}\mathcal O(\epsilon^2),\label{eq:avg_a_x_avg_b}
\end{align}
where $\theta_c^\star=\frac{1}{T}\sum_{t=1}^T \thetastar$ averages  the ground truth parameters.  
\end{lemma}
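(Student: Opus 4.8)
The plan is to expand the per-task squared error $\|\thetacl-\thetastar\|^2$ to first order in the centrality parameter $\epsilon$, using the closed form of $\thetacl$ derived in the proof of Lemma~\ref{lemma:thetaG} together with the asymptotics established in Lemma~\ref{lemma:thetacthetag}. First I would note that, since the separate estimates $\thetahat$ are pairwise distinct and distinct from $\thetac$ with probability one under the Gaussian model, for $\epsilon$ small enough the $\thetahat$ remain strictly farther apart than $\epsilon$, so every constraint in \eqref{eqn_perf_ana:cross_learning} is active. Hence \eqref{eq:thetacl_lambda} applies and gives $\thetacl=\epsilon_t\thetahat+(1-\epsilon_t)\thetage=\thetage+\epsilon_t(\thetahat-\thetage)$, while Lemma~\ref{lemma:thetacthetag} provides $\epsilon_t=\epsilon/\|\thetahat-\thetac\|+\mathcal O(\epsilon^2)$ and $\thetage-\thetac=\mathcal O(\epsilon)$.

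Next I would recenter $\thetacl-\thetastar$ at the consensus estimate. Because $\epsilon_t(\thetage-\thetac)=\mathcal O(\epsilon^2)$, one has
\begin{align*}
\thetacl-\thetastar=(\thetac-\thetastar)+(\thetage-\thetac)+\epsilon_t(\thetahat-\thetac)+\mathcal O(\epsilon^2),
\end{align*}
where $\thetac-\thetastar=\mathcal O(1)$ and the remaining two displayed terms are $\mathcal O(\epsilon)$. Squaring and discarding the $\mathcal O(\epsilon^2)$ self- and cross-contributions leaves
\begin{align*}
\|\thetacl-\thetastar\|^2-\|\thetac-\thetastar\|^2=2(\thetac-\thetastar)^\top(\thetage-\thetac)+2\epsilon_t(\thetac-\thetastar)^\top(\thetahat-\thetac)+\mathcal O(\epsilon^2).
\end{align*}
Averaging over $t$, the first term becomes $2(\thetage-\thetac)^\top(\thetac-\theta_c^\star)$, since $\thetage-\thetac$ is independent of $t$ and $\avt(\thetac-\thetastar)=\thetac-\theta_c^\star$.

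Finally I would substitute the two identities of Lemma~\ref{lemma:thetacthetag}. Writing $\hat u_t=(\thetahat-\thetac)/\|\thetahat-\thetac\|$ as in the proof of Proposition~\ref{prop:centralized_vs_cl}, the relation $\thetage-\thetac=\avt\epsilon_t(\thetac-\thetahat)+\mathcal O(\epsilon^2)$ combined with $\epsilon_t=\epsilon/\|\thetahat-\thetac\|+\mathcal O(\epsilon^2)$ yields $\thetage-\thetac=-\epsilon\,\avt\hat u_t+\mathcal O(\epsilon^2)$, so the first averaged term equals $-2\epsilon\,\avt\hat u_t^\top(\thetac-\theta_c^\star)+\mathcal O(\epsilon^2)$; likewise the second averaged term equals $2\epsilon\,\avt(\thetac-\thetastar)^\top\hat u_t+\mathcal O(\epsilon^2)$. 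Adding the two, the $\thetac$ contributions inside the inner products cancel, leaving $2\epsilon\,\avt\hat u_t^\top(\theta_c^\star-\thetastar)+\mathcal O(\epsilon^2)$, which is exactly \eqref{eq:avg_a_x_avg_b}.

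The step I expect to be the main obstacle is the bookkeeping of orders of magnitude: $\epsilon$ is the vanishing centrality parameter, whereas the noise-driven quantities $\thetac-\theta_c^\star$, $\thetahat-\thetac$ and $\thetac-\thetastar$ are $\mathcal O(1)$, so one must check carefully that products such as $\epsilon_t(\thetage-\thetac)$ and $\|\thetage-\thetac\|^2$ are genuinely $\mathcal O(\epsilon^2)$, and that the active-constraint hypothesis (hence the validity of \eqref{eq:thetacl_lambda} and of the $\epsilon_t$ expansion) holds uniformly over $t$ for all sufficiently small $\epsilon$. The cancellation of the $\thetac$ terms between the two first-order contributions is the only slightly delicate algebraic point, and it reduces to a one-line computation once both substitutions are in place.
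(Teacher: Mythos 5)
Your proposal is correct and follows essentially the same route as the paper's proof: both expand $\|\thetacl-\thetastar\|^2$ around $\thetac$, substitute the active-constraint form $\thetacl=\epsilon_t\thetahat+(1-\epsilon_t)\thetage$ from Lemma~\ref{lemma:thetaG}, and invoke Lemma~\ref{lemma:thetacthetag} twice (for $\thetage-\thetac$ and for $\epsilon_t$) so that the $\thetac$ contributions cancel and only the $(\thetastar-\theta_c^\star)^\top\hat u_t$ term survives at order $\epsilon$. Your explicit justification that all constraints are active for small $\epsilon$ is a minor addition the paper leaves implicit; otherwise the argument and bookkeeping match the paper's $I_t$/$J_t$ decomposition.
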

\begin{proof}
Let start by comparing the square errors of the consensus and the cross-learning estimators 
    \begin{align}
        &\|\thetacl-\thetastar\|^2=\|\thetacl-\thetac+\thetac-\thetastar\|^2\\
        &=\|\thetac-\thetastar\|^2+\|\thetacl-\thetac\|^2- 2(\thetastar-\thetac)^\top(\thetacl-\thetac)\nonumber\\
        &=  \|\thetac-\thetastar\|^2- 2I_t+ \mathcal O(\epsilon^2) 
    \end{align}
    where we defined $I_t=(\thetacl-\thetac)^\top(\thetastar-\thetac)$, 
    and substituted $\|\thetacl-\thetac\|^2=\mathcal O(\epsilon^2)$ since  the cross-learning constraint imposes  $\|\thetacl-\thetage \|\leq \epsilon$ and  Lemma \ref{lemma:thetacthetag} implies $\|\thetage-\thetac \|=\mathcal O(\epsilon)$.  
    
To express $I_t$ in terms of $\thetahat$, we add and subtract $\thetage$ 
    \begin{align}
        I_t&=(\thetacl-\thetage+\thetage-\thetac)^\top(\thetastar-\thetac)\\
        &=(\epsilon_t(\thetahat-\thetage)+\thetage-\thetac)^\top(\thetastar-\thetac)\\
        &=\epsilon_t(\thetahat-\thetac)^\top(\thetastar-\thetac)+J_t+\mathcal O(\epsilon^2)
            \end{align}
where we defined $J_t=(\thetage-\thetac)^\top(\thetastar-\thetac)$ and substituted $\epsilon_t(\thetahat-\thetage)=\epsilon_t(\thetahat-\thetac)+\mathcal O(\epsilon^2)$ using Lemma \ref{lemma:thetacthetag} again.

Next, we average $J_t$  across tasks to obtain 

       \begin{align}
       \avt J_t&\hspace{-2pt}=\avt(\thetage-\thetac)^\top(\thetastar-\thetac) \hspace{-2pt}
    =\hspace{-2pt}(\thetage-\thetac)^\top( \theta_c^\star-\thetac)\nonumber\\
    &=\avt\epsilon_t(\thetac-\thetahat)^\top( \theta_c^\star-\thetac)+\mathcal O(\epsilon^2)\label{eq:averageJt}
    \end{align}
after substituting \eqref{eq:lemma3tgetcerror} for $(\thetage-\thetac)$.
Then we conclude the proof by averaging $I_t$, which results in
\begin{align}
       \avt I_t  &=\avt \epsilon_t(\thetahat-\thetac)^\top(\thetastar-\thetac)\\
       &+\avt\epsilon_t(\thetac-\thetahat)^\top( \theta_c^\star-\thetac)+\mathcal O(\epsilon^2)\\
       &=\avt (\thetastar-\theta_c^\star)^\top\frac{(\thetahat-\thetac)}{\|\thetahat-\thetac\|}+\mathcal O(\epsilon^2)
       \end{align}
       with $\epsilon_t=\epsilon/\|\thetahat-\thetac\| +\mathcal O(\epsilon^2)$  as in Lemma \ref{lemma:thetacthetag}.
\end{proof}

\bibliographystyle{IEEEtran}
\bibliography{bib}

\end{document}